\documentclass{article} 
\usepackage{iclr2027_conference,times}

\usepackage{amsmath,amsfonts,bm}

\def\eqref#1{equation~\ref{#1}}

\def\1{\bm{1}}

\DeclareMathAlphabet{\mathsfit}{\encodingdefault}{\sfdefault}{m}{sl}
\SetMathAlphabet{\mathsfit}{bold}{\encodingdefault}{\sfdefault}{bx}{n}

\usepackage{hyperref}
\usepackage{url}

\usepackage{adjustbox,bbm,multirow,multicol}
\usepackage{natbib}
\usepackage{enumitem} 
\usepackage[normalem]{ulem}
\usepackage{algorithm}
\usepackage{bbding} 
\usepackage{makecell}
\PassOptionsToPackage{table}{xcolor} 
\usepackage{colortbl}  
\usepackage{nicematrix, tikz}

\usepackage{booktabs}
\usepackage{multirow}
\usepackage[table]{xcolor}
\usepackage{array}
\usepackage{graphicx}

\usepackage[T1]{fontenc}
\usepackage{lmodern}
\usepackage{amsmath,amssymb,amsthm}
\usepackage{tabularx}
\usepackage{algpseudocode}
\usepackage{pifont}
\usepackage{titletoc}
\usepackage{xcolor}
\usepackage{caption}
\usepackage{wrapfig}
\makeatletter
\newcommand{\authorthanks}[1]{%
  {\def\@makefnmark{\hbox{\@textsuperscript{\normalfont\@thefnmark}}}%
   \thanks{#1}}%
}
\makeatother

\newtheorem{mhgproposition}{Proposition}

\title{Remembering Through Change: Provenance-Preserving Hypergraph Memory for Multimodal Agents}

\author{%
\begin{minipage}[t]{\dimexpr\textwidth-2\tabcolsep\relax}
\raggedright\normalfont
\textbf{%
Jielong Tang$^{1,2}$\authorthanks{Work done during internship.},
Ling-An Zeng$^{2}$\authorthanks{Corresponding authors.},
Xujie Yuan$^{1}$,
Xiao Dong$^{3}$,
Zhiqian Xia$^{4,2*}$,
Qingfeng Li$^{1}$,
Shixiong Kai$^{2}$,
Kaichao Liang$^{2}$,
Rong Yao$^{2}$ and
Jian Yin$^{1\dagger}$}\\[0.7em]
$^{1}$ Sun Yat-sen University \quad
$^{2}$ Huawei Technologies, Noah Arks’ Lab\\
$^{3}$ Shandong Normal University \quad
$^{4}$ Southeast University
\end{minipage}%
}

\iclrfinalcopy 
\usepackage[most]{tcolorbox}
\definecolor{promptink}{HTML}{29343B}
\definecolor{promptsage}{HTML}{E4EDE7}
\definecolor{promptblue}{HTML}{E5ECF3}
\definecolor{promptlavender}{HTML}{EBE8F1}
\definecolor{promptborder}{HTML}{CBD4D5}
\newtcolorbox{mhgprompt}[2]{enhanced,breakable,
 colback=black!1!white,colframe=promptborder,colbacktitle=#1,
 coltitle=promptink,coltext=promptink,boxrule=0.5pt,arc=2mm,
 left=9pt,right=9pt,top=8pt,bottom=8pt,
 fonttitle=\sffamily\bfseries\small,title={#2},
 fontupper=\sffamily\fontsize{9}{10.5}\selectfont,
 before upper={\raggedright\setlength{\parindent}{0pt}\setlength{\parskip}{3pt}},
 before skip=9pt,after skip=8pt}

\begin{document}

\maketitle
\lhead{Preprint.}

\begin{abstract}

Long-term memory for multimodal agents remains challenging, as it must preserve fine-grained visual evidence while tracking entity evolution across sessions. Existing abstraction-based memories compress images into text, often discarding visually discriminative details, whereas pixel-preserving memories retain raw observations but struggle to reliably associate diverse observations with persistent entity identities and track their evolving states. To address these limitations, we propose \textbf{MHG-Mem}, a multimodal hypergraph memory that jointly organizes entity identity, state evolution, and provenance to visual evidence. Specifically, MHG-Mem represents distinct entity mentions and their observed facts as individual nodes, grouping identity-consistent mentions and facts through occurrence hyperedges. It then synthesizes aspect-specific evolution states from these grouped observations and links them to supporting entities, facts, and source evidence through evolutionary and evidential hyperedges. To navigate this structured memory, we introduce hypergraph-based agentic search, which adaptively selects an entry layer, traverses typed hyperedges across memory levels, and inspects source observations when fine-grained verification is required. These designs support both detailed visual verification and reasoning over cross-session state changes. Extensive experiments across three benchmarks demonstrate the effectiveness of MHG-Mem, improving performance by 8.20--9.98\% over the strongest baseline.

\end{abstract}

\section{Introduction}
\label{sec:introduction}

Long-term memory enables large language model (LLM) agents to reuse past experience beyond their context windows~\citep{packer2023memgpt,xu2025amem}. As agents increasingly browse websites and interact with users through images, multimodal memory becomes essential for understanding user needs and providing personalized assistance~\citep{guo2026memeye,liu2026worldmemarena}. As shown in Fig.~\ref{intro}(a), a web agent encounters several backpacks across sessions, with varying page layouts, product views, prices, and availability. Subsequent queries require \textit{inspecting the main zipper of the purchased backpack} (Q1) or \textit{determining whether an earlier offer for the same backpack was both cheaper and in stock} (Q2). Answering these queries remains challenging. Memory must preserve fine-grained visual evidence while tracking the entity's state across sessions despite variations in its visual observation.

\begin{figure}[]
\centering
\includegraphics[width=0.9\linewidth]{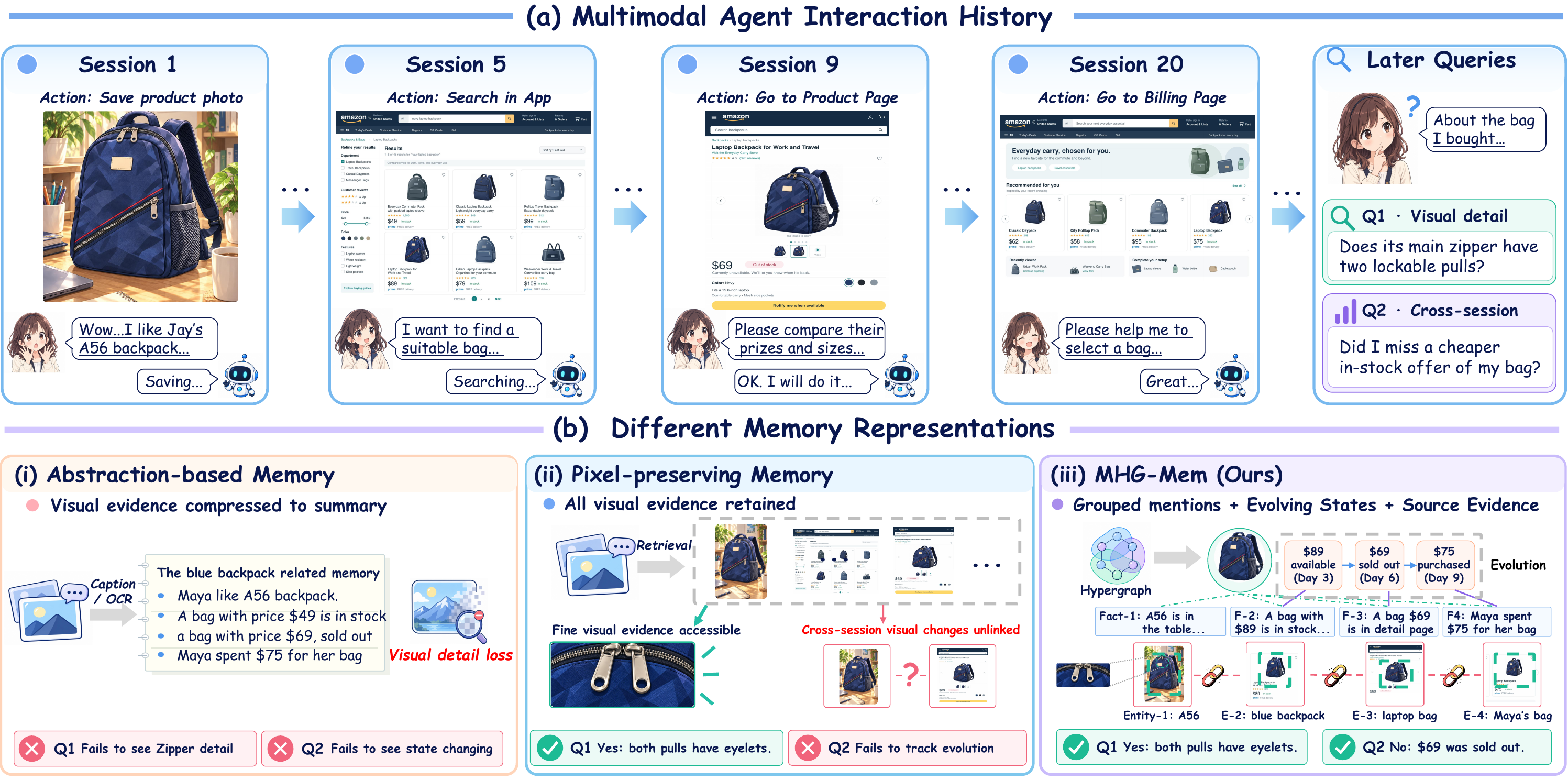}
\vspace{-2pt}
\caption{{(a) Long-term agent--user multimodal interaction example. (b) Comparison of existing approaches: (i) Abstraction-based memory loses visual details (failing Q1, Q2); (ii) Pixel-preserving memory lacks persistent identity--evolution bindings (failing Q2); (iii) Our MHG-Mem organizes multimodal observations via typed hyperedges, preserving both fine-grained visual crops for inspection and traceable evolution across diverse entity mentions.}}
\label{intro}
\vspace{-12pt}
\end{figure}

Existing multimodal memories address these requirements only partially.
As illustrated in Fig.~\ref{intro}(b), \textbf{abstraction-based memory} converts visual observations into textual descriptions or structured records for subsequent consolidation and retrieval~\citep{wang2025mirix,feng2026m2a}.
Such abstractions may omit visual details needed for inspection or for distinguishing similar entities.
\textbf{Pixel-preserving memory} retains original images, allowing relevant observations to be retrieved and inspected at query time~\citep{lu2026mma}.
Yet the retrieved images remain isolated observations. Nothing records that the product photo in one session and the listing page in another show the same backpack, or how its price and availability evolved.
Without reliable associations, an agent may confuse similar entities or incorrectly combine attributes in different observations.
These raise the question:
\textit{How can multimodal memory organize visually diverse observations of the same entity into coherent state histories while keeping each state traceable to its supporting evidence?}

Addressing this question requires preserving individual observations while explicitly organizing how they relate across sessions. The underlying memory structure is inherently \textit{higher-order}: the same entity may be observed differently across sessions, with each observation jointly associating several entity mentions, facts, and visual evidence, while its evolutions must integrate these observations into coherent state histories. These relationships therefore extend beyond isolated entities or facts and naturally form structured groups at different semantic levels. Motivated by this, we propose \textbf{MHG-Mem}, a multimodal hypergraph memory that uses typed hyperedges to explicitly organize these higher-order relationships while preserving each observation separately. Specifically, occurrence hyperedges group identity-consistent entity mentions and their associated facts across sessions, evolution nodes summarize aspect-specific state changes from temporally ordered observations, and evolutionary and evidential hyperedges connect each evolution to the participating entity mentions and its supporting facts, respectively. Because these facts retain provenance to the original dialogue and images, each inferred state remains traceable to its source evidence.

This structured memory naturally exposes evidence at multiple granularities, but different queries require different retrieval paths. Q1 ultimately requires inspecting a particular source image, whereas Q2 requires first locating the backpack's state history and then verifying the facts underlying the relevant price and availability states. A fixed retrieval strategy therefore cannot serve both effectively. To this end, we introduce \textbf{Hypergraph-based Agentic Search}, which adaptively selects an entry layer, traverses typed hyperedges across entity, evolution, and fact-level memory, and follows provenance links to inspect source observations when necessary. In this way, compact evolution records guide cross-session reasoning, while grounded facts and images provide fine-grained, verifiable evidence.
Our contributions are summarized as follows:

\begin{itemize}[leftmargin=1em, labelsep=0.3em, nosep]
    \item {We propose \textbf{MHG-Mem}, a multimodal hypergraph memory framework that organizes cross-session observations into coherent entity evolution histories through typed hyperedges, while preserving visual evidence and state-specific provenance.}
    
    \item {We introduce \textbf{Hypergraph-based Agentic Search}, which adaptively accesses entity groups, state histories, and their supporting observations to assemble query-relevant evidence across granularities.}
    
    \item {Experiments across three benchmarks and three backbones demonstrate MHG-Mem's effectiveness, improving average performance by 8.20–9.98\% over the strongest baseline.}
\end{itemize}

\section{Related Work}
\label{sec:related_work}

\textbf{Long-Term Multimodal Agent Memory.}
As agents increasingly interact with the physical and digital world, long-term memory research has expanded beyond text-centric systems~\citep{packer2023memgpt,xu2025amem, kang2025memoryos,chhikara2025mem0, zhong2024memorybank, yuan2026metakgrag} to multimodal retrieval~\citep{chen2022murag,yeo2025universalrag, tang2026sake}, episodic video memory~\citep{jeong2025videorag,fan2024videoagent}, and multimodal conversational memory~\citep{bei2026memgallery,maharana2024evaluating}. Recent benchmarks further examine whether agents can retain action trajectories, and visually evolving states across long-horizon interactions~\citep{liu2026worldmemarena,guo2026memeye}. Existing systems address important aspects of this problem by compressing observations into semantic abstractions~\citep{wang2025mirix,feng2026m2a,jain2025augustus}, organizing experiences around entities or events for cross-session reasoning~\citep{long2025m3agent,chen2026em2mem,nguyen2026visualmem,liu2026omnisimplemem}, and retrieving evidence across modalities or memory stores~\citep{kang2026vmem, lu2026mma,tian2026adamm}.
Yet these capabilities remain difficult to reconcile: semantic abstraction improves memory organization but may discard visual cues needed to distinguish similar observations, while preserving raw pixels alone does not establish which observations refer to the same entity or how their attributes evolve over time. 
MHG-Mem jointly addresses these challenges by preserving multimodal occurrences, associating them with persistent entity identities, synthesizing aspect-specific state histories, and grounding each state in supporting facts and source images within a typed hypergraph, enabling agentic search to traverse between compact abstractions and verifiable evidence.

\textbf{Hypergraph-based Agent Memory.} Graph memory has evolved from knowledge graphs~\citep{rasmussen2025zep, gutierrez2024hipporag}, toward entity-centered graph such as M3-Agent~\citep{long2025m3agent} and query-aware evidence expansion in GraphMemix~\citep{li2026graphmemix}. Hypergraph methods then make higher-order membership explicit: HyperGraphRAG~\citep{luo2025hypergraphrag} represents $n$-ary facts for corpus-level RAG, HGMEM~\citep{zhou2025hgmem} uses hypergraph working memory for multi-step reasoning, and HyperMem~\citep{yue2026hypermem} organizes long conversations as topic--episode--fact hypergraphs with coarse-to-fine retrieval. Despite their effectiveness, these approaches remain largely confined to text-centric memory organization, limiting their applicability to information-dense and dynamically evolving multimodal agent interactions. In contrast, MHG-Mem uses typed hyperedges to jointly organize persistent multimodal occurrences, entity-specific aspect evolutions and provenance, enabling identity-aware state tracking and fine-grained visual verification within a unified long-term memory framework.



\section{Methodology}
\label{sec:method}

To track entity states across sessions while preserving their supporting evidence, we propose MHG-Mem, a multimodal hypergraph memory framework. Typed hyperedges organize entity mentions, facts, and state evolutions into reusable histories while retaining observation-specific visual evidence and links from individual states to their sources. Hypergraph-based agentic search then adapts retrieval to the query, navigating these histories and inspecting the underlying evidence as needed. Figure \ref{fig:method} illustrates (a) memory construction, (b) the multimodal hypergraph memory structure, and (c) hypergraph-based agentic search.

\begin{figure}[t]
\centering
\includegraphics[width=1.0\linewidth]{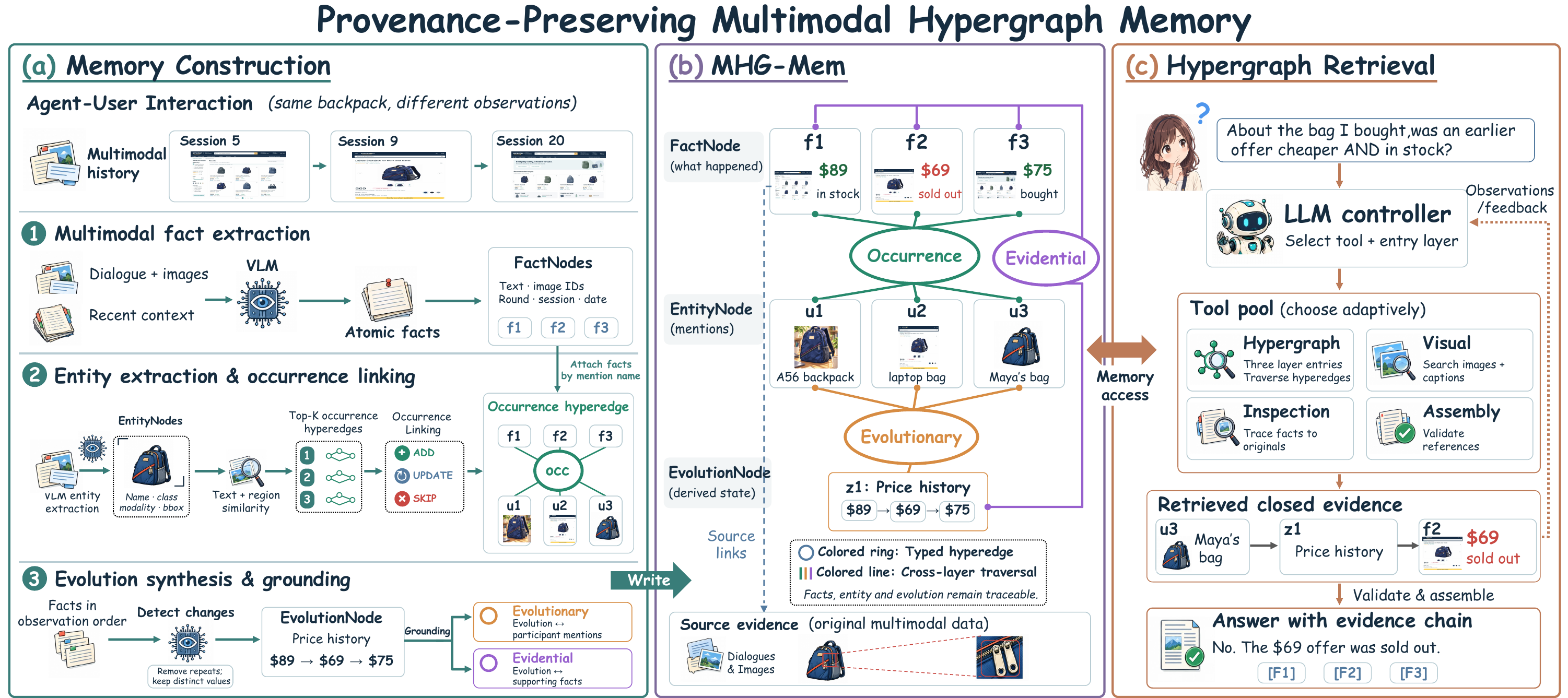}
\caption{Overview of \textbf{MHG-Mem}. (a) Memory construction proceeds through three steps. (b) The three-layer MHG-Mem architecture and three types of hyperedges enable cross-layer traversal and provenance tracing. (c) Hypergraph-based agentic search employs four types of tools to adaptively construct query-aware evidence chains.}
\label{fig:method}
\vspace{-10pt}
\end{figure}

\subsection{Problem Formulation}
\label{sec:method-problem}

Let $\mathcal{D}_{\leq T}=(d_1,\ldots,d_T)$ denote an agent’s ordered multimodal interaction history across sessions
, where each round $d_t=(x_t,\mathcal{I}_t,\delta_t)$ comprises dialogue text $x_t$, images $\mathcal{I}_t$, and a recorded date $\delta_t$. A memory construction operator $\mathcal{W}$ maps this history to persistent memory $\mathcal{M}_T=\mathcal{W}(\mathcal{D}_{\leq T})$ before subsequent queries are known. Given a query $q=(x_q,\mathcal{I}_q)$ 
with optional visual input $\mathcal{I}_q$, 
a retrieval operator $\mathcal{S}$ selects relevant evidence from $\mathcal{M}_T$. A multimodal agent then generates an answer conditioned on the query and retrieved evidence:
\begin{equation}
    {E}_q=\mathcal{S}(q,\mathcal{M}_T),
    \qquad
    \hat{y}\sim p_\theta(\,\cdot\mid q,{E}_q),
    \label{eq:mhg-pipeline}
\end{equation}

where ${E}_q$ denotes the retrieved evidence and $\theta$ denotes fixed model parameters. The objective is to design $\mathcal{W}$ and $\mathcal{S}$ to support accurate answering of multimodal agent.

\subsection{Multimodal Hypergraph Memory Structure}
MHG-Mem represents multimodal memory using three node types and three types of $n$-ary hyperedges. Nodes encode memory units at different levels of granularity, while hyperedges capture cross-level associations and provenance, allowing high-level memory to be traced back to their supporting evidence. 
We formalize this memory as an attributed hypergraph: 
\begin{equation}
\resizebox{0.85\linewidth}{!}{$\displaystyle
\mathcal{H}=(\mathcal{V},\mathcal{E},\iota,\tau_V,\tau_E),\quad
\mathcal{V}=\mathcal{F}\,\dot\cup\,\mathcal{U}\,\dot\cup\,\mathcal{Z},\quad
\mathcal{E}=\mathcal{E}_{\mathrm{occ}}\,\dot\cup\,\mathcal{E}_{\mathrm{evo}}\,\dot\cup\,\mathcal{E}_{\mathrm{evd}},\quad
\iota:\mathcal{E}\rightarrow 2^{\mathcal{V}}$}
\label{eq:mhg-hypergraph}
\end{equation}
where $\iota(e)$ denotes the set of nodes incident to hyperedge $e$, while
$\tau_V$ and $\tau_E$ specify the node and hyperedge types, respectively. The contribution of nodes and hyperedges is defined as follows: \ding{182} \texttt{FactNodes} store atomic multimodal observations.
\texttt{\ding{183} EntityNodes} store individual textual or visual entity mentions. We denote EntityNodes linked to the same underlying entity as a \texttt{referent group} $r$. \texttt{\ding{184} EvolutionNodes} store state changes in an entity attribute or relation. \texttt{\ding{185} Occurrence hyperedges} bind a referent's EntityNodes to its associated FactNodes. \texttt{\ding{186} Evolutionary hyperedges} connect EvolutionNodes to its associated EntityNodes. \texttt{\ding{187} Evidential hyperedges} connect EvolutionNodes to the supporting FactNodes used for verification.

The complete node, hyperedge, and incidence definitions are given in Appendix~\ref{app:mhg-schema}. The notation is summarized in Appendix~\ref{app:notation}. This memory structure supports cross-session
identity tracking, state synthesis, and provenance tracing while preserving original visual evidence.

\subsection{MHG-Mem Construction}
\label{sec:method-construction}

\textbf{Step~1: Multimodal Fact Extraction.}
In each round $d_t$, a vision-language model (VLM) extracts atomic multimodal fact descriptions
from dialogue and images, using recent within-session context
$\mathcal{C}_t$ to resolve references. 
These descriptions are grounded
as FactNodes:
\begin{equation}
    \mathcal{F}_t=\operatorname{Ground}_t\!\left(\operatorname{VLM}_{\mathrm{fact}}(d_t;\mathcal{C}_t)\right), 
    \label{eq:mhg-fact-extraction}
\end{equation}
where $\operatorname{Ground}_t$ creates each
$f=(\ell_f,\mathcal{I}_f, t_f,s_f,\delta_f)$ by attaching the current round's
image identifiers and source metadata to the extracted description $\ell_f$.

\textbf{Step~2: Multimodal Entity Extraction and Occurrence Linking.}
A VLM extracts multimodal entity mentions from $d_t$.
For each mention $u$, we score an occurrence hyperedge by independently
maximizing textual and visual similarity over its constituent EntityNodes:
\begin{equation}
    s(u,e_r^{\mathrm{occ}})=w_T\max_{v\in\mathcal{U}_r}c_T(u,v)+w_I\max_{v\in\mathcal{U}_r^{\mathrm{vis}}}c_I(u,v).
    \label{eq:mhg-link-score}
\end{equation}
Here, $c_T$ and $c_I$ are cosine similarities between mention-name and
image-region embeddings, respectively, and $\mathcal{U}_r^{\mathrm{vis}}$
contains the visual mentions in $\mathcal{U}_r$.  Conditioned on $d_t$, $u$, and the top-$K$ hyperedges ranked by $s$, the model
selects one of three actions for occurrence linking: \texttt{UPDATE} adds
$u$ to the mention set $\mathcal{U}_r$ of a matching referent group,
\texttt{ADD} creates a new group $r$ with $\mathcal{U}_r=\{u\}$ and its
occurrence hyperedge, and \texttt{SKIP} rejects an invalid entity mention.
Each accepted mention remains a distinct EntityNode, retaining its visual
region when available. We then attach facts in $\mathcal{F}_t$ to occurrence
hyperedges by mention-name matching:
\begin{equation}
    \mathcal{F}_r\leftarrow\mathcal{F}_r\cup\{f\in\mathcal{F}_t:A(f,e_r^{\mathrm{occ}})=1\}.
    \label{eq:mhg-occurrence-update}
\end{equation}
Here, $A(f,e_r^{\mathrm{occ}})\in\{0,1\}$ is a lexical matching
indicator that equals 1 iff $\ell_f$ contains the name of at least
one EntityNode in $\mathcal{U}_r$. The resulting membership
is $\iota(e_r^{\mathrm{occ}})=\mathcal{U}_r\cup\mathcal{F}_r$.

\textbf{Step~3: Evolution Synthesis and Grounding.}
To track the evolving states of the same entity across sessions, we
synthesize the facts grouped by each occurrence hyperedge into state
histories with explicit evidence grounding. For each occurrence hyperedge $e_r^{\mathrm{occ}}$, candidate EvolutionNodes $\widehat{\mathcal{Z}}_r$ are generated from the associated
facts $\mathcal{F}_r$:
\begin{equation}
    \widehat{\mathcal{Z}}_r=
      \operatorname{LLM}_{\mathrm{evo}}\!\left(
        \operatorname{Order}(\mathcal{F}_r)\right),
    \label{eq:mhg-evolution-synthesis}
\end{equation}
where $\operatorname{Order}$ follows observation-order $t_f$ in
$\mathcal{D}_{\leq T}$. The model identifies changes in the same attribute
or relation $a_z$ across sessions, excluding repeated descriptions and
incidental scene variation. Each retained $z \in\widehat{\mathcal{Z}}_r$ records the subject referent
$r_z=r$, summary $\sigma_z$, update type $\kappa_z$ (replacement, update,
or addition), and state sequence
\begin{equation}
    S_z=\bigl((v_j,f_j)\bigr)_{j=1}^{m_z},\qquad
    \mathcal{F}_z=\{f_j:1\leq j\leq m_z\}\subseteq\mathcal{F}_{r_z},
    \label{eq:mhg-evolution}
\end{equation}
where $v_j$ is the changing value extracted for $a_z$, and the
$m_z$ states follow observation order. Retention requires valid references within $\mathcal{F}_{r_z}$ and at least two distinct values.

The evolutionary hyperedge $e_z^{\mathrm{evo}}$ then connects $z$ to subject
mentions $\mathcal{U}_z^{\mathrm{subj}}\subseteq\mathcal{U}_{r_z}$ and coactor
mentions $\mathcal{U}_z^{\mathrm{co}}$. Coactors come from cited rounds of
other groups $r'\ne r_z$ sharing supporting facts,
$\mathcal{F}_{r'}\cap\mathcal{F}_z\ne\varnothing$.
The evidential hyperedge $e_z^{\mathrm{evd}}$ links $z$ to its supporting
facts $\mathcal{F}_z$, with $S_z$ retaining the individual state--fact
correspondence. Please refer to Appendix~\ref{app:mhg-construction-details} for more details of MHG-Mem construction.

\subsection{Hypergraph-based Agentic Search}
\label{sec:method-search}
Since different queries require integrating evidence at different levels of granularity, fixed retrieval and graph traversal still struggle to get the optimal evidence chain~\citep{luo2025graph}. We therefore implement $\mathcal{S}$ as agentic search that adaptively selects a memory layer, traverses hyperedges across memory layers, and follows provenance links to sources.

\textbf{Adaptive Search.}
Given a multimodal query $q$ with optional images and search history
$h_k=((a_i,o_i))_{i<k}$, an LLM controller $\pi_\theta$ selects
a tool call $a_k$ and receives observation $o_k$:
\begin{equation}
    a_k\sim\pi_\theta(\,\cdot\mid q,h_k),\qquad
    o_k=\operatorname{Execute}(a_k,\mathcal{M}_T).
    \label{eq:mhg-controller}
\end{equation}
Four tool types support this agent loop: (i) Hypergraph Tool (ii) Visual Tool (iii) Inspection Tool and (iv) Assembly Tool. Table~\ref{tab:mhg-tools} shows the tool details, the search process is outlined in Algorithm~\ref{alg:mhg-search}.

\textbf{Hypergraph Retrieval.}
Each memory layer provides an isolated search tool (as described in Table~\ref{tab:mhg-tools}), with hyperedge traversal
among retrieved entities, evolutions and facts.

\emph{Fact-layer Search.} For a search query $w$ generated by the LLM,
we rank $f\in\mathcal{F}$ using
\begin{equation}
    s_F(w,q,f)=\lambda_Ts_T(w,f)+\lambda_{TI}s_{TI}(w,f)
                         +\lambda_{II}s_{II}(q,f),
    \label{eq:mhg-fact-score}
\end{equation}
where $s_T$, $s_{TI}$, and $s_{II}$ measure text--text, text--image,
and image--image cosine similarity with weights $\lambda_T$,
$\lambda_{TI}$, and $\lambda_{II}$. Visual terms take the best match
over associated images, including $\mathcal{I}_q$ for $s_{II}$.
Retrieved facts retain their source references for inspection.

\emph{Entity-layer Search.} A name or query image identifies referent groups
through maximum textual similarity to referent entity names and visual similarity
over $\mathcal{U}_r^{\mathrm{vis}}$ (equation~\ref{eq:mhg-link-score}). 
We then traverse each matched occurrence hyperedge $e_r^{\mathrm{occ}}$
from the entity layer to the fact layer $\mathcal{F}_r=\iota(e_r^{\mathrm{occ}})\cap\mathcal{F}$.
This gathers observations of the same underlying entity across rounds.

\emph{Evolution-layer Search.} To retrieve how an entity changes over time,
we first identify its referent group $r$ from a name or query image and
retrieve the EvolutionNodes belonging to that group ($r_z=r$).
The evolutionary hyperedge $e_z^{\mathrm{evo}}$ connects each such node
to mentions of its subject entity. We then compare the query text $x_q$
with each node's attribute $a_z$ and summary $\sigma_z$ to assess relevance:
\begin{equation}
    s_Z(q,z)=\cos\bigl(\phi_T(x_q),\phi_T(a_z\mathbin{\|}\sigma_z)\bigr).
    \label{eq:mhg-evolution-score}
\end{equation}
Here, $\phi_T$ encodes text, $\cos(\cdot)$ denotes cosine similarity, and $\|$ concatenates the attribute and summary.
For each selected $z$, we traverse its evidential hyperedge
$e_z^{\mathrm{evd}}$ from the evolution layer to the fact layer,
getting the supporting facts $\mathcal{F}_z$ in the order recorded by $S_z$.

\emph{Cross-layer Traversal.}
A traversal step moves from a retrieved node $v\in\mathcal{V}$ to another
memory layer through incident hyperedges. Given selected hyperedges
$\mathcal{E}'\subseteq\mathcal{E}$ and a target layer
$\mathcal{V}'\in\{\mathcal{F},\mathcal{U},\mathcal{Z}\}$ with
$v\notin\mathcal{V}'$, the cross-layer traversal can be formalized as:
\begin{equation}
    \scalebox{0.85}{$\displaystyle
    \operatorname{Traverse}(v,\mathcal{E}',\mathcal{V}')=
    \bigcup\nolimits_{e\in\mathcal{E}',\,v\in\iota(e)}
        (\iota(e)\cap\mathcal{V}').
    $}
    \label{eq:mhg-traverse}
\end{equation}

\textbf{Source Inspection and Evidence Assembly.}
The Inspection Tool follows $\rho(\cdot)$ from retrieved FactNodes to the original observations, enabling the controller to inspect fine-grained evidence for queries that require detailed information or visual verification. For an EvolutionNode $z$, traversing $e_z^{\mathrm{evd}}$ to supporting
facts $f\in\mathcal{F}_z$ and then following $\rho(f)$ keeps its state
history traceable to the original observations. The Assembly Tool organizes
selected evidence and image references, together with optional summary, into
an evidence chain $E_q$. After reference validation, this chain is submitted
to the QA model with the cited textual and visual evidence to answer $q$
as in Equation~\eqref{eq:mhg-pipeline}.

\section{Experiment}

\subsection{Experimental Setup}
\label{sec:experiment-setup}

\textbf{Benchmarks and Metrics.}
We evaluate on three multimodal memory benchmarks:
MemEye~\citep{guo2026memeye}, Mem-Gallery~\citep{bei2026memgallery}, and
WorldMemArena-S (subset)~\citep{liu2026worldmemarena}.
We report exact match (EM) for MemEye's
multiple-choice (MCQ) questions, token-level F1 for the other two benchmarks,
and BLEU-1 (B-1) and LLM-as-a-judge (Judge) score for open-ended answers on all
three. 
For memory quality evaluation, we follow~\citep{liu2026worldmemarena} using the same judging rubrics, recall, and retrieval coverage. Dataset and metric details are provided in Appendix~\ref{app:experiment-benchmarks}.

\textbf{Implementation Details.}
We use Qwen3-VL-8B~\citep{bai2025qwen3vl}, GPT-4.1-mini~\citep{openai2025gpt41mini}, and Qwen3.8-27B~\citep{qwen2026qwen38} as different memory backbones, GPT-5.4-mini~\citep{openai2026gpt54mini} as LLM-as-a-judge model.
MHG-Mem uses all-MiniLM-L6-v2 for text embeddings and SigLIP2~\citep{tschannen2025siglip2}  for visual and cross-modal embeddings.
For a fair comparison, we reproduce all baseline methods with the same model setting and running environment of MHG-Mem. Results are averaged over 3 runs. Additional implementation settings are provided in Appendix~\ref{app:experiment-implementation}.

\textbf{Baselines.}
We compare with four unimodal memory methods using dense image caption:
A-Mem~\citep{xu2025amem}, MemoryOS~\citep{kang2025memoryos},
SimpleMem~\citep{liu2026simplemem}, and HyperMem~\citep{yue2026hypermem};
and four multimodal memory or RAG methods:
M2A~\citep{feng2026m2a}, MMA~\citep{lu2026mma},
SRAG~\citep{guo2026memeye}, and UniversalRAG~\citep{yeo2025universalrag}.
Detailed baseline descriptions are deferred to
Appendix~\ref{app:experiment-baselines}.

\definecolor{BackboneGray}{HTML}{E9ECEF}
\definecolor{OursBG}{HTML}{F3F7FD}
\definecolor{GainBlue}{HTML}{2563EB}

\newcommand{\ph}{--}

\newcommand{\judgegain}[1]{\,{\color{blue}\scriptsize\textbf{(+#1)}}}

\newcommand{\backbonerow}[1]{%
    \rowcolor{BackboneGray}
    \multicolumn{13}{c}{%
        \rule{0pt}{2.35ex}%
        \textit{#1}%
        \rule[-0.75ex]{0pt}{0pt}
    }\\
}

\begin{table*}[t]
\centering

\caption{
\textbf{Comparison with state-of-the-art memory systems across different backbone models.}
We report Exact Match (\textbf{EM}), BLEU-1 (\textbf{B-1}), and LLM-Judge (\textbf{Judge})
on \textbf{MemEye}, \textbf{Mem-Gallery}, and \textbf{WorldMemArena-S}.
\textbf{Avg. J.} denotes the average Judge score across the three benchmarks. Bold and underlined results indicate the best and second-best performance, respectively.
}
\label{tab:main_results}


\setlength{\tabcolsep}{4.0pt}
\renewcommand{\arraystretch}{1.08}

\resizebox{\linewidth}{!}{%
\begin{tabular}{llc*{10}{c}}

\toprule

\multirow{2}{*}{\textbf{Type}}
&
\multirow{2}{*}{\textbf{Method}}
&
\multirow{2}{*}{\textbf{Venue}}
&
\multicolumn{3}{c}{\textbf{MemEye}}
&
\multicolumn{3}{c}{\textbf{Mem-Gallery}}
&
\multicolumn{3}{c}{\textbf{WorldMemArena-S}}
&
\multirow{2}{*}{\textbf{Avg. J.}}
\\

\cmidrule(lr){4-6}
\cmidrule(lr){7-9}
\cmidrule(lr){10-12}

&
&
&
\textbf{EM} &
\textbf{B-1} &
\textbf{Judge}
&
\textbf{F1} &
\textbf{B-1} &
\textbf{Judge}
&
\textbf{F1} &
\textbf{B-1} &
\textbf{Judge}
&
\\

\midrule

\backbonerow{Qwen3-VL-8B}

\multirow{4}{*}{Unimodal}
& A-Mem
& NIPS'25
& 45.15 & 19.22 & 29.38
& 47.68 & 41.76 & \underline{66.93}
& \textbf{25.23} & 12.16 & \underline{46.57}
& 47.63
\\

& MemoryOS
& EMNLP'25
& 40.23 & 17.22 & 26.28
& 39.36 & 35.28 & 58.86
& 10.24 & 3.50 & 20.18
& 35.11
\\

& SimpleMem
& ICML'26
& 44.27 & 16.04 & 31.26
& 37.36 & 33.12 & 52.43
& 14.33 & 4.71 & 33.50
& 39.06
\\

& HyperMem
& ACL'26
& 42.31 & 16.84 & 28.42
& 41.05 & 32.80 & 62.56
& 20.06 & \underline{12.33} & 40.45
& 43.81
\\

\cmidrule(lr){1-13}

\multirow{4}{*}{Multimodal}
& M2A
& Arxiv'26
& 39.96 & 10.40 & 18.41
& 18.19 & 14.03 & 37.65
& 23.02 & 11.31 & 41.64
& 32.57
\\

& MMA
& AACL’26
& 46.92 & 20.62 & 38.17
& 34.80 & 30.34 & 57.49
& 21.73 & 10.82 & 38.95
& 44.87
\\

& SRAG
& Arxiv’26 
& 56.78 & \underline{24.87} & 41.80
& 40.74 & 35.89 & 63.62
& \underline{23.53} & 11.21 & 43.93
& \underline{49.78}
\\

& UniversalRAG
& ACL’26
& \underline{57.48} & 23.92 & \underline{42.09}
& \underline{51.05} & \underline{46.39} & 66.23
& 14.96 & 5.06 & 36.33
& 48.21
\\

\cmidrule(lr){1-13}

\rowcolor{OursBG}
\textbf{Ours}
& \textbf{MHG-Mem}
& \ph
& \textbf{61.26} & \textbf{26.52} & \textbf{46.28}
& \textbf{66.00} & \textbf{60.90} & \textbf{76.89}
& 23.17 & \textbf{15.29} & \textbf{50.78}
& \textbf{57.98}\judgegain{8.20}
\\

\specialrule{0.9pt}{0.25em}{0.25em}

\backbonerow{GPT-4.1-mini}

\multirow{4}{*}{Unimodal}
& A-Mem
& NIPS’25
& 49.73 & 21.11 & 38.27
& 44.08 & 38.37 & 64.35
& \underline{33.39} & 17.54 & \underline{60.40}
& 54.34
\\

& MemoryOS
& EMNLP’25
& 45.75 & 13.12 & 34.91
& 43.09 & 37.76 & 64.64
& 17.46 & 6.63 & 35.87
& 45.14
\\

& SimpleMem
& ICML’26 
& 50.34 & 9.42 & 42.59
& 34.73 & 28.90 & 58.33
& 23.75 & 14.71 & 56.13
& 52.35
\\

& HyperMem
& ACL’26 
& 47.82 & 19.64 & 36.43
& 44.57 & 33.42 & 66.76
& 24.91 & 15.62 & 52.04
& 51.74
\\

\cmidrule(lr){1-13}

\multirow{4}{*}{Multimodal}
& M2A
& Arxiv’26
& 52.31 & 6.31 & 44.50
& 15.04 & 10.28 & 51.81
& 29.26 & \underline{18.55} & 53.38
& 49.90
\\

& MMA
& AACL’26
& 50.96 & 16.32 & 43.81
& 26.99 & 20.60 & 60.87
& 30.10 & 16.33 & 51.15
& 51.95
\\

& SRAG
& Arxiv’26 
& 58.36 & \textbf{31.48} & 50.54
& 41.07 & 35.03 & 67.06
& 30.47 & 18.34 & 54.27
& 57.29
\\

& UniversalRAG
& ACL’26
& \underline{60.49} & 11.23 & \underline{52.61}
& \underline{48.36} & \underline{41.90} & \underline{70.63}
& 26.44 & 13.03 & 50.95
& \underline{58.06}
\\

\cmidrule(lr){1-13}

\rowcolor{OursBG}
\textbf{Ours}
& \textbf{MHG-Mem}
& \ph
& \textbf{66.84} & \underline{22.36} & \textbf{59.73}
& \textbf{52.58} & \textbf{45.75} & \textbf{81.56}
& \textbf{33.61} & \textbf{21.52} & \textbf{62.57}
& \textbf{67.95}\judgegain{9.89}
\\

\specialrule{0.9pt}{0.25em}{0.25em}

\backbonerow{Qwen3.8-27B}

\multirow{4}{*}{Unimodal}
& A-Mem
& NIPS’25
& 45.10 & 18.13 & 33.64
& \underline{60.56} & \underline{54.37} & 75.98
& 27.99 & 13.92 & 57.49
& 55.71
\\

& MemoryOS
& EMNLP’25
& 42.24 & 16.67 & 28.25
& 42.82 & 37.49 & 70.19
& 10.17 & 3.76 & 20.95
& 39.80
\\

& SimpleMem
& ICML’26
& 47.84 & 5.37 & 37.40
& 49.43 & 43.26 & 68.70
& 18.95 & 7.34 & 46.33
& 50.81
\\

& HyperMem
& ACL’26
& 50.56 & 21.02 & 39.66
& 47.23 & 36.64 & 73.03
& 22.84 & 10.61 & 50.57
& 54.42
\\

\cmidrule(lr){1-13}

\multirow{4}{*}{Multimodal}
& M2A
& Arxiv’26
& 48.23 & 19.51 & 35.90
& 20.46 & 15.49 & 65.30
& 28.15 & \underline{14.14} & 55.58
& 52.25
\\

& MMA
& AACL’26
& 56.47 & 24.95 & 47.30
& 35.18 & 29.03 & 69.32
& 26.66 & 13.35 & 52.80
& 56.47
\\

& SRAG
& Arxiv’26 
& 66.95 & \underline{28.26} & 53.37
& 41.45 & 35.68 & 75.80
& \underline{28.78} & 13.97 & \underline{59.69}
& 62.95
\\

& UniversalRAG
& ACL’26
& \underline{67.85} & 13.45 & \underline{60.64}
& 57.30 & 50.51 & \underline{76.18}
& 22.11 & 9.34 & 53.07
& \underline{63.30}
\\

\cmidrule(lr){1-13}

\rowcolor{OursBG}
\textbf{Ours}
& \textbf{MHG-Mem}
& \ph
& \textbf{78.76} & \textbf{39.54} & \textbf{65.93}
& \textbf{66.07} & \textbf{59.84} & \textbf{87.85}
& \textbf{30.48} & \textbf{21.21} & \textbf{66.05}
& \textbf{73.28}\judgegain{9.98}
\\

\bottomrule

\end{tabular}%
}

\end{table*}

\subsection{Main Results}
\label{sec:main-results}

Table~\ref{tab:main_results} compares MHG-Mem with eight memory systems across three benchmarks and three backbones. Figure~\ref{exp:memory_tasks} shows memory quality on WorldMemArena-S and fine-grained task performance on WorldMemArena-S and Mem-Gallery.
We highlight four key findings:

\textbf{(1) MHG-Mem consistently outperforms existing memory systems across benchmarks and backbones.}
MHG-Mem achieves the highest Judge score in every benchmark--backbone setting, with Avg.\ J.\ scores of 57.98\% (\textbf{+8.20\%}), 67.95\% (\textbf{+9.89\%}), and 73.28\% (\textbf{+9.98\%}) on Qwen3-VL-8B, GPT-4.1-mini, and Qwen3.8-27B, respectively.
Moreover, MHG-Mem achieves the best or second-best results on metrics including EM, BLEU-1, and F1. These consistent improvements across models and benchmarks demonstrate the effectiveness and generalization of provenance-preserving multimodal hypergraph memory.

\textbf{(2) Effective visual memory requires structured access to preserved observations.}
On the visual-centric MemEye benchmark, MHG-Mem with Qwen3.8-27B achieves 78.76\% EM, compared with 67.85\% for UniversalRAG and 66.95\% for SRAG.
A similar advantage appears in visually demanding categories in Figure~\ref{exp:memory_tasks}(b): MHG-Mem achieves 82\% versus 60\% on visual reasoning (VR) in Mem-Gallery and 87\% versus 83\% on visual fact recall (VFR) in WorldMemArena-S.
These results suggest that retaining images alone is insufficient when relevant observations cannot be reliably associated with the underlying entity.
Occurrence hyperedges address this limitation by grouping visually diverse mentions of the same referent while preserving direct access to observation-specific visual evidence.

\textbf{(3) Substantial performance gains on temporal and cross-modal reasoning.}
MHG-Mem achieves the best performance in 17 of the 19 task categories in Figure~\ref{exp:memory_tasks}(b), with particularly large gains on tasks requiring integration of observations across sessions.
On WorldMemArena-S, temporal reasoning (TR) improves from 48\% to 87\% over the strongest baseline, while dynamic update (DU), test-time learning (TTL), and cross-modal reasoning (CMR) improve by 17\%, 17\%, and 15\%, respectively.
On Mem-Gallery, visual reasoning (VR) and temporal reasoning (TR) improve by 18\% and 17\%.
These gains are supported by our evolution design, which summarize time-ordered entity states to guide retrieval, together with evidential hyperedges that preserve state--fact correspondence and enable provenance tracing from synthesized states back to the originating dialogue and images.

\textbf{(4) Memory quality and evidence accessibility improve together.}
As shown in Figure~\ref{exp:memory_tasks}(a), MHG-Mem achieves the highest memory recall, memory correctness, and retrieval coverage among all compared systems.
The simultaneous improvement in memory quality and coverage indicates that MHG-Mem preserves useful information accurately while keeping its supporting evidence readily accessible.

\begin{figure}[t]
\centering
\setlength{\abovecaptionskip}{0pt} 
\setlength{\belowcaptionskip}{0pt} 
\includegraphics[
    width=\linewidth,
    trim={0 0 0 0},
    clip
]{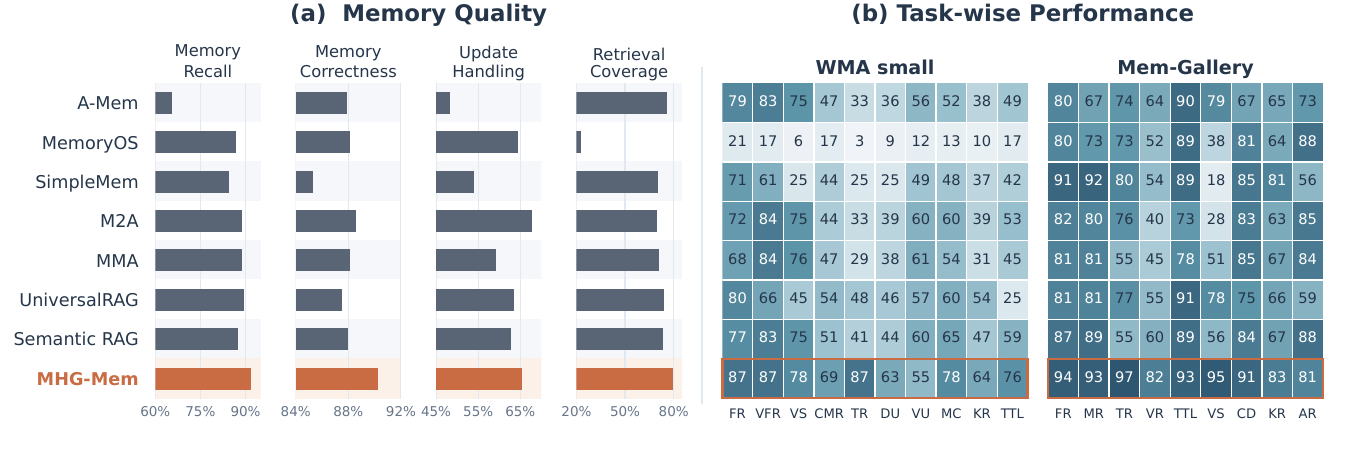}
\caption{(a) Memory quality and (b) fine-grained tasks performance on WorldMemArena-S (WMA-S) and Mem-Gallery with Qwen3.8-27B.}
\label{exp:memory_tasks}
\end{figure}

\subsection{Ablation Study}
\label{sec:ablation}

Tables~\ref{tab:mhg-ablation} and~\ref{tab:mhg-search-ablation} evaluate memory construction and retrieval components using the average LLM-judge score across the three benchmarks.
Memory components are introduced cumulatively, whereas retrieval components are removed individually. Specifically, \textbf{w/o Agentic Search} replaces agentic search with three-layer top-$k$ retrieval; \textbf{w/o Inspection Tool}, \textbf{w/o Cross-layer Traversal}, and \textbf{w/o Visual Tool} disable the corresponding functionality.

\paragraph{Memory-side ablation.}
\textbf{(1) Effectiveness of the entity node with occurrence hyperedge.}
The entity layer with occurrence hyperedges improves performance by $5.48\%$ over fact-only memory. Occurrence hyperedges account for $4.78\%$ of this gain, with improvements across all three benchmarks. These hyperedges connect mentions and facts associated with the same entity, enabling retrieval of its observations across sessions.
\textbf{(2) Effectiveness of the evolution node with evolutionary hyperedge.}
The evolution layer with evolutionary hyperedges yields a further $3.37\%$ improvement. This representation makes state histories directly retrievable and associates them with entities in subject and coactor roles, supporting reasoning about changes across sessions.
\textbf{(3) Effectiveness of the evidential hyperedge.}
Evidential hyperedges provide an additional $1.56\%$ gain by linking evolution states to their supporting facts. These connections preserve provenance and allow retrieved states to be checked against the observations from which they were derived.

\paragraph{Retrieval-side ablation.}
\textbf{(1) Effectiveness of agentic search.}
Three-layer top-$k$ retrieval underperforms agentic search by $5.10\%$, the largest decline among the retrieval ablations. This result supports query-adaptive selection of retrieval entry points and subsequent evidence expansion.
\textbf{(2) Effectiveness of the inspection tool.}
Without source inspection, performance falls by $3.44\%$, the largest decline among individual tool ablations. Access to original dialogue and images enables the agent to verify fine-grained evidence that facts and evolution summaries may omit.
\textbf{(3) Effectiveness of cross-layer traversal.}
Removing cross-layer traversal reduces performance by $3.09\%$. Traversal connects entities to their associated facts and evolution records to supporting observations, allowing the agent to assemble evidence across memory granularities.
\textbf{(4) Effectiveness of the visual tool.}
Performance decreases by $1.37\%$ when image retrieval is disabled. Direct visual search provides a complementary route to relevant observations, extending evidence access beyond entity- and evolution-centred retrieval.


\begin{table}[t]
    \centering
    \setlength{\parskip}{0pt}
    \renewcommand{\footnotesize}{\fontsize{7}{8}\selectfont}
    \captionsetup{font=footnotesize,labelfont=normalfont,textfont=normalfont,
        justification=raggedright,singlelinecheck=false,position=top,skip=2pt}
    \definecolor{mhgheadergray}{gray}{0.93}
    \setlength{\tabcolsep}{1.8pt}
    \renewcommand{\arraystretch}{1.05}
    \begin{minipage}[t]{0.49\linewidth}
        \vspace{0pt}
        \centering
        \caption{Ablation of memory side with Qwen3.8-27B.}
        \label{tab:mhg-ablation}
        \label{tab:mhg-layer-ablation}
        \fontsize{7}{8}\selectfont
        \begin{adjustbox}{max width=\linewidth}
            \begin{tabular}{@{}lc||cccc@{}}
                \noalign{\hrule height 0.6pt}
                \rowcolor{mhgheadergray}
                \textbf{Layers} & \textbf{Hyperedges}
                    & \textbf{MemEye}
                    & \textbf{Mem-Gallery}
                    & \textbf{WMA-S}
                    & \textbf{Avg.} \\
                \hline\hline
                $\mathcal{F}$      &     & 53.81 & 76.08 & 58.72 & 62.87 \\
                $\mathcal{F}\,\dot\cup\,\mathcal{U}$    &     & 54.06 & 77.54 & 59.10 & 63.57 \\
                $\mathcal{F}\,\dot\cup\,\mathcal{U}$    & $\mathcal{E}_{\mathrm{occ}}$   & 60.69 & 82.13 & 62.24 & 68.35 \\
                $\mathcal{F}\,\dot\cup\,\mathcal{U}\,\dot\cup\,\mathcal{Z}$ & $\mathcal{E}_{\mathrm{occ}}\,\dot\cup\,\mathcal{E}_{\mathrm{evo}}$ & 64.70 & 85.48 & 64.97 & 71.72 \\
                \arrayrulecolor{gray}
                \midrule
                \arrayrulecolor{black}
                \textbf{$\mathcal{F}\,\dot\cup\,\mathcal{U}\,\dot\cup\,\mathcal{Z}$} & \textbf{$\mathcal{E}_{\mathrm{occ}}\,\dot\cup\,\mathcal{E}_{\mathrm{evo}}\,\dot\cup\,\mathcal{E}_{\mathrm{evd}}$}
                    & \textbf{65.93} & \textbf{87.85} & \textbf{66.05} & \textbf{73.28} \\
                \hline
            \end{tabular}
        \end{adjustbox}
    \end{minipage}\hfill%
    \begin{minipage}[t]{0.49\linewidth}
        \vspace{0pt}
        \centering
        \caption{Ablation of retrieval side with Qwen3.8-27B.}
        \label{tab:mhg-search-ablation}
        \fontsize{7}{8}\selectfont
        \begin{adjustbox}{max width=\linewidth}
            \begin{tabular}{@{}l||cccc@{}}
                \noalign{\hrule height 0.6pt}
                \rowcolor{mhgheadergray}
                \textbf{Method}
                    & \textbf{MemEye}
                    & \textbf{Mem-Gallery}
                    & \textbf{WMA-S}
                    & \textbf{Avg.} \\
                \hline\hline
                w/o Agentic Search
                    & 60.59 & 82.06 & 61.90 & 68.18 \\
                w/o Inspection Tool
                    & 61.77 & 84.64 & 63.12 & 69.84 \\
                w/o Cross-layer Traversal
                    & 62.33 & 84.49 & 63.74 & 70.19\\
                w/o Visual Tool
                    & 63.52 & 86.98 & 65.23 & 71.91\\
                \arrayrulecolor{gray}
                \midrule
                \arrayrulecolor{black}
                \textbf{ MHG-Mem (Full)}
                    & \textbf{65.93} & \textbf{87.85} & \textbf{66.05} & \textbf{73.28} \\
                \hline
            \end{tabular}
        \end{adjustbox}
    \end{minipage}
\end{table}

\begin{figure}[t]
\centering
\includegraphics[width=0.9\linewidth]{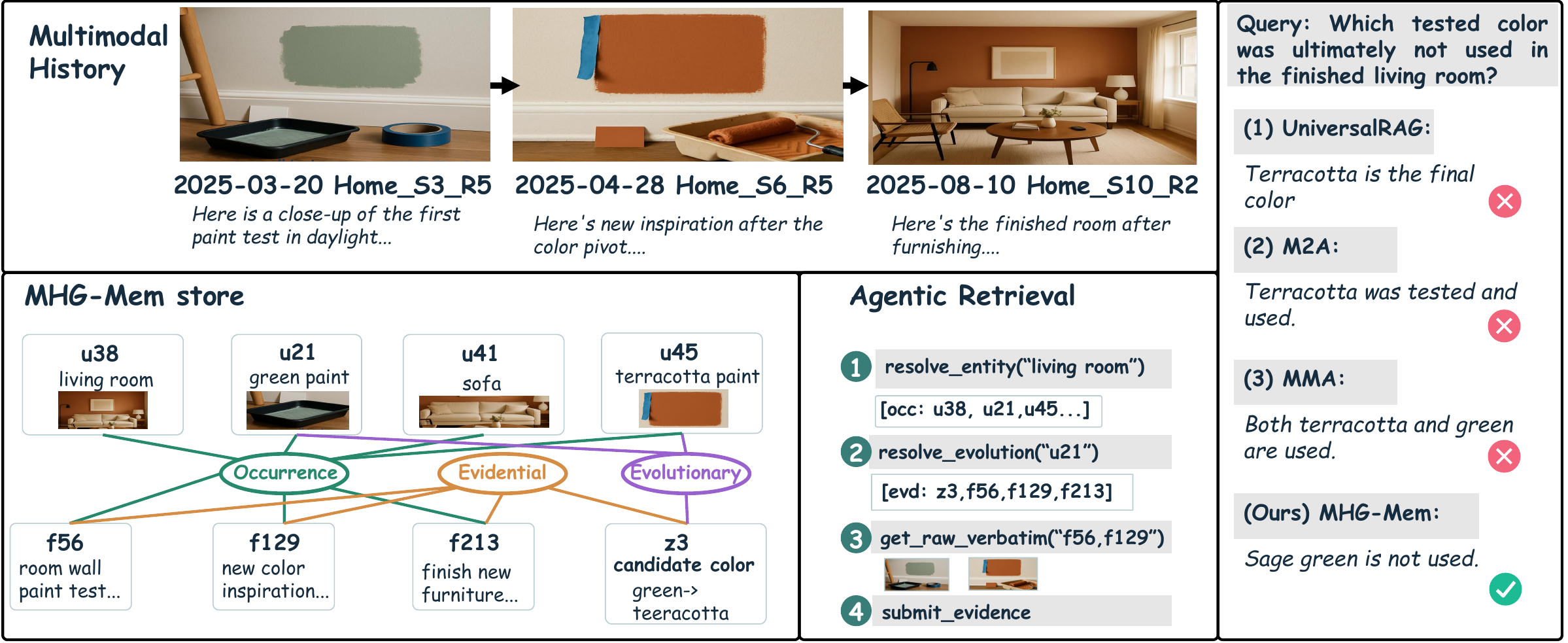}
\caption{Case study in a home renovation scenario. MHG-Mem tracks wall paint color changes through evolution nodes and traces supporting facts to source images via evidential hyperedges, forming a verifiable evidence chain.}
\label{fig:case1}
\vspace{-10pt}
\end{figure}

\subsection{Case Analysis}
\label{sec:case_study}
Figure~\ref{fig:case1} illustrates the advantages of MHG-Mem in a home renovation scenario involving successive living room color schemes. To identify the scheme the user did not adopt, the memory system must trace the progression from color trials through revisions to completion and inspect the corresponding images. UniversalRAG and M2A retrieve bedroom images by similarity but miss the initial color information, while MMA preserves the pixels without tracking the evolving schemes. MHG-Mem identifies \texttt{paint} as the tracked entity, preserves its mentions across rounds, and links atomic facts through occurrence hyperedges. Evolution nodes record the change in candidate color from sage green to terracotta. During retrieval, the agent follows evidential hyperedges to \texttt{f56} and \texttt{f129}, then inspects their source images to verify the colors and answer correctly. 

\section*{Conclusion}
\label{sec:conclusion}

We presented MHG-Mem, a multimodal hypergraph memory framework that organizes visually diverse observations of the same entity into coherent state histories while preserving fine-grained visual evidence. Typed n-ary hyperedges connect entity mentions, atomic facts, and evolution records, keeping individual states traceable to their supporting observations. Hypergraph-based agentic search adaptively navigates these memory levels, using state histories to guide retrieval and original dialogue and images to verify evidence. Ablations demonstrate the contributions of entity associations, explicit state histories, evidence grounding, and adaptive retrieval. These findings support the joint design of memory organization and evidence access for reasoning over evolving multimodal experience.

\bibliography{iclr2027_conference}
\bibliographystyle{iclr2027_conference}

\clearpage
\appendix
\startcontents[appendix]

\section*{Appendix Contents}
\printcontents[appendix]{}{1}{%
    \setcounter{tocdepth}{2}%
}

\clearpage

\ifcsname mhgproposition\endcsname
\else
    \newtheorem{mhgproposition}{Proposition}
\fi

\section{Notation Definition}
\label{app:notation}

\begin{table}[H]
    \centering
    \footnotesize
    \setlength{\tabcolsep}{4pt}
    \renewcommand{\arraystretch}{1.08}
    \label{tab:mhg-symbols}
    \begin{tabularx}{\linewidth}{@{}>{$}l<{$}>{\raggedright\arraybackslash}X>{$}l<{$}>{\raggedright\arraybackslash}X@{}}
        \toprule
        Symbol & Meaning & Symbol & Meaning \\
        \midrule
        \mathcal{D}_{\leq T} & Ordered multimodal history
            & d_t & Interaction round $t$ \\
        x_t & Dialogue text in round $t$
            & \mathcal{I}_t & Images in round $t$ \\
        \delta_t & Date of round $t$
            & \mathcal{C}_t & Recent within-session context \\
        q=(x_q,\mathcal{I}_q) & Multimodal query
            & \mathcal{W},\mathcal{S} & Memory construction and retrieval operators \\
        E_q,\hat{y} & Retrieved evidence and generated answer
            & \theta & Fixed model parameters \\
        \mathcal{H} & Attributed multimodal hypergraph
            & \mathcal{M}_T & Persistent memory at history length $T$ \\
        \mathcal{V},\mathcal{E} & Node and hyperedge sets
            & \iota & Hyperedge incidence map \\
        \tau_V,\tau_E & Node-type and hyperedge-type maps
            & \rho & Source-round reference map \\
        \mathcal{F},\mathcal{U},\mathcal{Z} & FactNode, EntityNode, and EvolutionNode sets
            & \mathcal{K} & Stored embedding indices \\
        \mathcal{E}_{\mathrm{occ}},\mathcal{E}_{\mathrm{evo}},\mathcal{E}_{\mathrm{evd}}
            & Occurrence, evolutionary, and evidential hyperedge families
            & r & Referent group \\
        f=(\ell_f,\mathcal{I}_f,t_f,s_f,\delta_f)
            & FactNode and its text, images, round, session, and date
            & u=(n_u,c_u,\mu_u,i_u,b_u) & EntityNode and its mention attributes \\
        \mathcal{U}_r,\mathcal{F}_r & Mentions and facts associated with referent $r$
            & e_r^{\mathrm{occ}} & Occurrence hyperedge of $r$ \\
        z=(r_z,a_z,\sigma_z,S_z,\kappa_z)
            & EvolutionNode and its subject, aspect, summary, state trace, and update type
            & \mathcal{F}_z & Facts supporting evolution $z$ \\
        S_z=((v_j,f_j))_{j=1}^{m_z} & Ordered state--fact trace of $z$
            & v_j,m_z & State value at position $j$ and number of recorded states \\
        \mathcal{U}_z^{\mathrm{subj}},\mathcal{U}_z^{\mathrm{co}}
            & Subject and coactor mentions of $z$
            & e_z^{\mathrm{evo}},e_z^{\mathrm{evd}} & Evolutionary and evidential hyperedges \\
        a_k,o_k,h_k & Tool action, observation, and search history at step $k$
            & \pi_\theta & LLM search controller \\
        B_{\mathrm{step}},B_{\mathrm{img}} & Search-step and image budgets
            & w & Controller-generated fact-search string \\
        \mathcal{F}_t,\widehat{\mathcal{Z}}_r & Current-round facts and candidate evolutions
            & A & Fact-to-occurrence anchoring predicate \\
        s_T,s_{TI},s_{II};\lambda_T,\lambda_{TI},\lambda_{II}
            & Fact-level component scores and weights
            & w_T,w_I,c_T,c_I & Entity-linking weights and similarities \\
        s_F,s_Z & Fact- and evolution-level relevance scores
            & \phi_T,\phi_I,\psi_T & Text, image, and cross-modal encoders \\
        \operatorname{Traverse}(v,\mathcal{E}',\mathcal{V}') & Cross-layer traversal operator
            & R_z & Source rounds cited by evolution $z$ \\
        \mathcal{R},\operatorname{Src}(z) & Retained archive and recorded source set of $z$
            & K & Candidate limit in construction \\
        \bottomrule
    \end{tabularx}
\end{table}

\section{Additional Method Details}
\label{app:mhg-method}

This section provides more detailed hypergraph schema, notation definition, construction and search algorithms, tool specifications, and a source-recoverability proof.

\subsection{Detailed Hypergraph Memory Schema}
\label{app:mhg-schema}

MHG-Mem is an attributed hypergraph with three node types and three typed
$n$-ary hyperedge families:

\paragraph{Nodes.}
\textbf{FactNode} is an atomic multimodal observation such as an event,
state, visual attribute, or spatial relation. Each $f\in\mathcal{F}$ stores
\begin{equation}
    f=(\ell_f,\mathcal{I}_f,t_f,s_f,\delta_f),
    \label{eq:mhg-fact-node}
\end{equation}
where $\ell_f$ is the fact description, $\mathcal{I}_f$ is its set of image
identifiers, and $(t_f,s_f,\delta_f)$ gives the source round, session, and
date.

\textbf{EntityNode} represents a distinct entity mention expressed as a text span, an image region, or both. Each
$u\in\mathcal{U}$ stores
\begin{equation}
    u=(n_u,c_u,\mu_u,i_u,b_u),
    \qquad b_u=(x_1,y_1,x_2,y_2)\in[0,1]^4,
    \label{eq:mhg-entity-node}
\end{equation}
where $n_u$ is its name, $c_u$ its entity class, $\mu_u$ its modality,
$i_u$ an optional image identifier, and $b_u$ an optional normalized visual
region. EntityNodes linked to the same underlying entity form a referent
group $r$ with mention set $\mathcal{U}_r$; every mention remains a distinct
observation.

\textbf{EvolutionNode} records a change in an attribute or relation of a
referent group, such as changes in appearance, price, or location. Each $z\in\mathcal{Z}$ stores
\begin{equation}
    z=(r_z,a_z,\sigma_z,S_z,\kappa_z),
    \qquad
    S_z=((v_j,f_j))_{j=1}^{m_z},
    \qquad
    \mathcal{F}_z=\{f_j:1\leq j\leq m_z\}\subseteq\mathcal{F}_{r_z},
    \label{eq:mhg-evolution-node}
\end{equation}
where $r_z$ is the subject referent, $a_z$ is the evolving attribute or
relation, $\sigma_z$ summarizes the observed states, $S_z$ is the ordered
state--fact trace, and $\kappa_z$ is the update type.

\paragraph{Hyperedges.}
\textbf{Occurrence hyperedge} binds all mentions and associated facts of a
referent group:
\begin{equation}
    \iota(e_r^{\mathrm{occ}})=\mathcal{U}_r\cup\mathcal{F}_r,
    \qquad
    \mathcal{U}_r=\iota(e_r^{\mathrm{occ}})\cap\mathcal{U}.
    \label{eq:mhg-occurrence}
\end{equation}

\textbf{Evolutionary hyperedge} binds an EvolutionNode to its subject and
coactor mentions:
\begin{equation}
    \iota(e_z^{\mathrm{evo}})=
    \{z\}\cup\mathcal{U}_z^{\mathrm{subj}}\cup\mathcal{U}_z^{\mathrm{co}},
    \label{eq:mhg-evolutionary}
\end{equation}
where $\mathcal{U}_z^{\mathrm{subj}}\subseteq\mathcal{U}_{r_z}$ contains
mentions of the changing entity and $\mathcal{U}_z^{\mathrm{co}}$ contains
mentions from other participating referents. 

\textbf{Evidential hyperedge}
connects the evolution to its supporting facts:
\begin{equation}
    \iota(e_z^{\mathrm{evd}})=\{z\}\cup\mathcal{F}_z.
    \label{eq:mhg-evidential}
\end{equation}

Finally, $\rho(v)$ gives the source-round reference for
$v\in\mathcal{F}\cup\mathcal{U}$, while $\mathcal{I}_f$ and $(i_u,b_u)$
retain image-level and region-level grounding. The complete memory is
\begin{equation}
    \mathcal{M}_T=(\mathcal{H},\rho,\mathcal{K}).
    \label{eq:mhg-memory}
\end{equation}

\subsection{Detailed Memory Construction}
\label{app:mhg-construction-details}

Algorithm~\ref{alg:mhg-build} combines the three construction steps.
Original rounds and images are archived before extraction. FactNodes
retain round-level image references $\mathcal{I}_f$, whereas visual
EntityNodes retain observation-specific regions $(i_u,b_u)$; both use
$\rho$ to locate their source rounds.


\paragraph{Evolution validation and participants.}
The synthesizer reads ordered fact descriptions and their identifiers,
and proposes histories for changing attributes or relations across
sessions. Invalid fact references and repeated references are removed;
the remaining state--fact pairs are ordered by $t_f$.
For a candidate with subject $r_z=r$, retention requires
\begin{equation}
    \begin{aligned}
        &f_j\in\mathcal{F}_r &&(1\leq j\leq m_z),\\
        &t_{f_j}\leq t_{f_{j+1}} &&(1\leq j<m_z),\\
        &\bigl|\{v_j:1\leq j\leq m_z\}\bigr|\geq 2.
    \end{aligned}
    \label{eq:app-evolution-validity}
\end{equation}
The values must be nonempty, and distinctness is checked after case and
whitespace normalization. These checks validate the recorded sequence;
identifying a meaningful change remains the synthesizer's task.

Let $R_z=\{\rho(f):f\in\mathcal{F}_z\}$ be the cited round references.
Subject mentions are selected from $\mathcal{U}_{r_z}$ with
$\rho(u)\in R_z$; if none occurs in these rounds, one stored subject
mention identifies the referent. For each other group $r'\ne r_z$ with
$\mathcal{F}_{r'}\cap\mathcal{F}_z\ne\varnothing$, a mention satisfying
$\rho(u)\in R_z$ is selected as a coactor when available.
These role sets define $e_z^{\mathrm{evo}}$; the distinct state references
define $e_z^{\mathrm{evd}}$ through
Equations~\eqref{eq:mhg-evolutionary} and~\eqref{eq:mhg-evidential}.
The ordered correspondence remains in $S_z$.

\begin{algorithm}[t]
    \caption{MHG-Mem construction}
    \label{alg:mhg-build}
    \begin{algorithmic}[1]
        \Require History $\mathcal{D}_{\leq T}$, construction models,
                 candidate limit $K$
        \Ensure Memory $\mathcal{M}_T=(\mathcal{H},\rho,\mathcal{K})$
        \State Initialize the archive, $\mathcal{H}$, $\rho$, and $\mathcal{K}$
        \For{$t=1,\ldots,T$}
            \State Archive $d_t$; index its images and captions
            \State Extract $\mathcal{F}_t$ using Equation~\eqref{eq:mhg-fact-extraction}
                   \Comment{\textbf{Step 1}}
            \State Store $\mathcal{F}_t$, source references, and text embeddings
            \State Extract entity candidates from $d_t$ \Comment{\textbf{Step 2}}
            \For{each candidate $u$}
                \State Rank occurrence hyperedges by Equation~\eqref{eq:mhg-link-score}
                \State Condition on $d_t$, $u$, and the top-$K$ candidates
                \State Select \texttt{UPDATE}, \texttt{ADD}, or \texttt{SKIP}
                \If{the action is not \texttt{SKIP}}
                    \State Store $u$ and $\rho(u)$; execute the occurrence update
                    \State Index its name and available visual region
                \EndIf
            \EndFor
            \State Anchor $\mathcal{F}_t$ using Equation~\eqref{eq:mhg-occurrence-update}
        \EndFor
        \For{each referent group $r$ with facts across sessions}
            \State Generate $\widehat{\mathcal{Z}}_r$ using
                   Equation~\eqref{eq:mhg-evolution-synthesis}
                   \Comment{\textbf{Step 3}}
            \State Filter and order state references; apply
                   Equation~\eqref{eq:app-evolution-validity}
            \For{each retained $z\in\widehat{\mathcal{Z}}_r$}
                \State Set $r_z=r$; construct subject and coactor memberships
                \State Store $z$, $e_z^{\mathrm{evo}}$, and $e_z^{\mathrm{evd}}$
                \State Index $\phi_T(a_z\mathbin{\|}\sigma_z)$ in $\mathcal{K}$
            \EndFor
        \EndFor
        \State \Return $\mathcal{M}_T$
    \end{algorithmic}
\end{algorithm}

\paragraph{Incremental construction.}
At a new session boundary, only groups receiving newly anchored facts
require resynthesis. Their evolution records are regenerated from the
updated $\mathcal{F}_r$. Successful replacements update the derived nodes,
hyperedges, and embeddings while retaining the underlying facts, mentions,
and original sources.

\subsection{Agentic Search and Tool Specification}
\label{app:mhg-search-details}

Table~\ref{tab:mhg-tools} specifies the six tools available to the controller.
Algorithm~\ref{alg:mhg-search} uses them under a step budget
$B_{\mathrm{step}}$ and a budget $B_{\mathrm{img}}$ on distinct images
presented during search.

\begin{table}[t]
    \centering
    \small
    \caption{Four tool types for hypergraph-based agentic search.
    Query images, when supplied, are available to the visual lookups.}
    \label{tab:mhg-tools}
    \begin{tabularx}{\linewidth}{@{}>{\raggedright\arraybackslash}p{0.14\linewidth}>{\raggedright\arraybackslash}p{0.22\linewidth}>{\raggedright\arraybackslash}p{0.29\linewidth}>{\raggedright\arraybackslash}X@{}}
        \toprule
        Type & Tool & Input & Returned information \\
        \midrule
        Hypergraph
            & \texttt{resolve\_facts}
            & Text query $w$
            & Ranked facts and source references. \\
            & \texttt{resolve\_entity}
            & Entity name or query image
            & Candidate referent groups and corresponding occurrence linking facts. \\
            & \texttt{resolve\_evolution}
            & Referent name, query image, or \texttt{evolution\_id}
            & Evolutions and evidential linking facts. \\
        Visual
            & \texttt{search\_image}
            & Text query or query image
            & Matching images. \\
        Inspection
            & \texttt{get\_raw\_verbatim}
            & Source-round identifier
            & Original dialogue and images. \\
        Assembly
            & \texttt{submit\_evidence}
            & Fact and image references; optional summary
            & Validation feedback; accepted evidence forms $E_q$. \\
        \bottomrule
    \end{tabularx}
\end{table}

\paragraph{Multimodal embedding scoring.}
Using the text encoder $\phi_T$ from the main text, let $\phi_I$ encode
images and $\psi_T$ encode text in $\phi_I$'s feature space.
An image identifier is resolved to its pixels before encoding.
The components of Equation~\eqref{eq:mhg-fact-score} are
\begin{equation}
    \begin{aligned}
        s_T(w,f)&=\cos\bigl(\phi_T(w),\phi_T(\ell_f)\bigr),\\
        s_{TI}(w,f)&=\max\bigl(\{0\}\cup
            \{\cos(\psi_T(w),\phi_I(I)):I\in\mathcal{I}_f\}\bigr),\\
        s_{II}(q,f)&=\max\bigl(\{0\}\cup
            \{\cos(\phi_I(J),\phi_I(I)):
                J\in\mathcal{I}_q,\ I\in\mathcal{I}_f\}\bigr).
    \end{aligned}
    \label{eq:app-fact-score-components}
\end{equation}
Visual scores are floored at zero, and unavailable terms contribute zero
without renormalizing $\lambda_T,\lambda_{TI},\lambda_{II}$.
The Visual Tool searches stored captions for a text lookup and image
embeddings for a query-image lookup.

\paragraph{Traversal and evidence assembly.}
Entity and evolution lookups perform the membership expansions described
in Section~\ref{sec:method-search}. Evolution relevance uses
$a_z\mathbin{\|}\sigma_z$, while readout dereferences the supporting facts
in the order retained by $S_z$. Source inspection follows $\rho(f)$ to
recover original observations. Facts returned by retrieval or associated
with accessed rounds are registered for citation. At submission, fact
identifiers must be registered and available, and attached image identifiers
must belong to the cited fact's source round. Invalid references are removed.
The retained references select textual and visual evidence for $E_q$,
together with the optional controller summary. These checks validate
identifier associations, not the semantic content of that summary.


\subsection{Provenance Preservation}
\label{app:mhg-properties}

Let $\mathcal{R}$ be the archive mapping source-round identifiers to
retained observations, so $\rho(f)\in\operatorname{dom}(\mathcal{R})$
means that the source of $f$ is available.
For $S_z=((v_j,f_j))_{j=1}^{m_z}$, define the recorded source set
\begin{equation}
    \operatorname{Src}(z)=\{\rho(f_j):1\leq j\leq m_z\}.
    \label{eq:app-source-set}
\end{equation}
The following property concerns access to recorded evidence, independently
of extraction accuracy or the truth of generated summaries.

\begin{mhgproposition}[Source recoverability]
\label{prop:mhg-recoverability}
Suppose every retained state cites $f_j\in\mathcal{F}_{r_z}$ with
$\rho(f_j)\in\operatorname{dom}(\mathcal{R})$, and
$e_z^{\mathrm{evd}}$ stores exactly the distinct facts cited in $S_z$ as
in Equation~\eqref{eq:mhg-evidential}. Then each state has a trace to its
original observation, and expanding $e_z^{\mathrm{evd}}$ followed by
$\rho$ recovers exactly $\operatorname{Src}(z)$. This property is
preserved by fact and mention additions and validated evolution
insertions or replacements with valid fact identifiers, source bindings,
and archive records.
\end{mhgproposition}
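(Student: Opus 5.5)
The plan is a direct set-theoretic argument in two parts: a static part for a fixed memory state, then an invariant argument over the construction operations.

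\emph{Static part.} Fix an EvolutionNode $z$ with trace $S_z=((v_j,f_j))_{j=1}^{m_z}$.
\begin{itemize}
\item \textbf{Per-state trace.} By hypothesis each $f_j\in\mathcal{F}_{r_z}\subseteq\mathcal{F}$, so $\rho(f_j)$ is defined. Since $\rho(f_j)\in\operatorname{dom}(\mathcal{R})$, the lookup $\mathcal{R}(\rho(f_j))$ returns the retained original round. The chain $v_j\mapsto f_j\mapsto\rho(f_j)\mapsto\mathcal{R}(\rho(f_j))$ is then the required trace for state $j$; $S_z$ keeps the pairing $(v_j,f_j)$, so the trace is per state and not just per evolution.
\item \textbf{Exact recovery.} Expanding $e_z^{\mathrm{evd}}$ means applying Equation~\eqref{eq:mhg-traverse} with $\mathcal{E}'=\{e_z^{\mathrm{evd}}\}$ and target layer $\mathcal{F}$. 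This gives $\iota(e_z^{\mathrm{evd}})\cap\mathcal{F}=(\{z\}\cup\mathcal{F}_z)\cap\mathcal{F}=\mathcal{F}_z$, using that $z\in\mathcal{Z}$ and $\mathcal{V}$ is a disjoint union. By the "exactly the distinct facts" hypothesis, $\mathcal{F}_z=\{f_j:1\le j\le m_z\}$. Taking images under $\rho$ gives $\rho(\mathcal{F}_z)=\{\rho(f_j)\}_j=\operatorname{Src}(z)$, which is both inclusions at once. Duplicates in $S_z$ collapse harmlessly under set image.
\end{itemize}

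\emph{Preservation part.} Treat the hypothesis as an invariant $P(\mathcal{M})$: for all $z$, every $f_j\in\mathcal{F}_{r_z}$, $\rho(f_j)\in\operatorname{dom}(\mathcal{R})$, and $\iota(e_z^{\mathrm{evd}})=\{z\}\cup\{f_j\}_j$. Then check each operation.
\begin{itemize}
\item \textbf{Fact addition (Step 1).} Adding $f$ only inserts new nodes and extends $\rho$ and $\mathcal{R}$; rounds are archived before extraction, per Algorithm~\ref{alg:mhg-build}. Existing values of $\rho$ and $\mathcal{R}$ are unchanged, and no evidential hyperedge is modified.
\item \textbf{Occurrence anchoring (Equation~\eqref{eq:mhg-occurrence-update}).} This only enlarges $\mathcal{F}_r$, so $f_j\in\mathcal{F}_{r_z}$ persists by monotonicity.
\item \textbf{Mention addition.} UPDATE or ADD touches only $\mathcal{U}$ and occurrence or evolutionary memberships, never $\iota(e^{\mathrm{evd}})$ or the $\rho$-values of facts.
\item \textbf{Validated evolution insertion.} The validity conditions~\eqref{eq:app-evolution-validity} give $f_j\in\mathcal{F}_r$. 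The "valid source bindings and archive records" assumption gives $\rho(f_j)\in\operatorname{dom}(\mathcal{R})$. The construction in Appendix~\ref{app:mhg-construction-details} sets $e_z^{\mathrm{evd}}$ from the distinct state references. Together these establish $P$ for the new $z$; other evolutions are untouched.
\item \textbf{Replacement during incremental resynthesis.} This is deletion of the old $z$ with its hyperedges, followed by an insertion of the previous kind. Facts, mentions and archive are retained, so no other $z'$ loses its support.
\end{itemize}

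\emph{Main obstacle.} The proof is a monotonicity argument, so the hard step is making explicit that no operation ever removes facts, shrinks $\mathcal{F}_r$, rebinds $\rho$, or deletes archive entries. I would state this as a lemma read off Algorithm~\ref{alg:mhg-build} and the incremental-construction paragraph. I would also note that replacement deletes only derived nodes, so the invariant for other evolutions follows by framing: their data are disjoint from what changes. Once $P$ is invariant, the static part applies to every reachable memory state.
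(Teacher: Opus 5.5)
Your proposal is correct and follows the paper's proof essentially step for step. The static part is the same computation $\operatorname{Traverse}(z,\{e_z^{\mathrm{evd}}\},\mathcal{F})=(\{z\}\cup\mathcal{F}_z)\cap\mathcal{F}=\mathcal{F}_z$ followed by taking $\rho$-images, and the preservation part is the same induction over construction operations; the only difference is packaging, since the paper handles every operation at once via $\mathcal{Z}^{+}=(\mathcal{Z}^{-}\setminus\mathcal{Z}_{\mathrm{del}})\,\dot\cup\,\mathcal{Z}_{\mathrm{new}}$ (fact and mention additions being the case $\mathcal{Z}_{\mathrm{del}}=\mathcal{Z}_{\mathrm{new}}=\varnothing$) and explicitly starts the induction from the vacuously valid empty memory, whereas you enumerate the operations separately and spell out that $\mathcal{F}_r$, $\rho$, and $\mathcal{R}$ only grow.
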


\begin{proof}
Because $\mathcal{F}$ and $\mathcal{Z}$ are disjoint,
$z\notin\mathcal{F}$. Since every $f_j\in\mathcal{F}_{r_z}\subseteq
\mathcal{F}$, Equation~\eqref{eq:mhg-traverse} gives
\begin{equation}
    \begin{aligned}
        \operatorname{Traverse}(z,\{e_z^{\mathrm{evd}}\},\mathcal{F})
            &=\iota(e_z^{\mathrm{evd}})\cap\mathcal{F}\\
            &=(\{z\}\cup\mathcal{F}_z)\cap\mathcal{F}
              =\mathcal{F}_z.
    \end{aligned}
    \label{eq:app-provenance-membership}
\end{equation}
For every $1\leq j\leq m_z$, the membership $f_j\in\mathcal{F}_z$
and the source-availability assumption therefore define the trace
\begin{equation}
    z\xrightarrow{\ e_z^{\mathrm{evd}}\ }f_j
      \xrightarrow{\ \rho\ }\rho(f_j)
      \xrightarrow{\ \mathcal{R}\ }\mathcal{R}(\rho(f_j)).
    \label{eq:app-provenance-path}
\end{equation}
Applying $\rho$ to all recovered fact members yields
\begin{equation}
    \begin{aligned}
        &\{\rho(f):f\in
            \operatorname{Traverse}(z,\{e_z^{\mathrm{evd}}\},\mathcal{F})\}\\
        &\qquad=\{\rho(f):f\in\mathcal{F}_z\}\\
        &\qquad=\{\rho(f_j):1\leq j\leq m_z\}
                 =\operatorname{Src}(z).
    \end{aligned}
    \label{eq:app-provenance-recovery}
\end{equation}
Thus expansion neither adds nor omits a recorded source. The set equality
deduplicates shared sources; the index-specific association
$j\mapsto f_j\mapsto\rho(f_j)$ remains available through $S_z$.

For preservation, denote the evolution sets before and after an update by
$\mathcal{Z}^{-}$ and $\mathcal{Z}^{+}$. If $\mathcal{Z}_{\mathrm{del}}$
contains removed evolutions and $\mathcal{Z}_{\mathrm{new}}$ contains
validated new evolutions, then
\begin{equation}
    \mathcal{Z}^{+}
      =(\mathcal{Z}^{-}\setminus\mathcal{Z}_{\mathrm{del}})
          \,\dot\cup\,\mathcal{Z}_{\mathrm{new}}.
    \label{eq:app-evolution-update}
\end{equation}
For every surviving $z$, unchanged state references and source bindings give
\begin{equation}
    \operatorname{Src}^{+}(z)=\operatorname{Src}^{-}(z)
       \subseteq\operatorname{dom}(\mathcal{R}^{-})
       \subseteq\operatorname{dom}(\mathcal{R}^{+}),
    \label{eq:app-source-preservation}
\end{equation}
where superscripts identify the corresponding memory snapshots.
Its evidential membership is unchanged, so
Equations~\eqref{eq:app-provenance-membership}--\eqref{eq:app-provenance-recovery}
continue to hold. For every new evolution, validation supplies the same
assumptions, and the preceding argument applies directly.
Fact and mention additions are the case
$\mathcal{Z}_{\mathrm{del}}=\mathcal{Z}_{\mathrm{new}}=\varnothing$.
The empty memory satisfies the property vacuously; induction over these
operations establishes it for every successfully constructed snapshot.
\end{proof}

\begin{algorithm}[t]
    \caption{Hypergraph-based agentic search}
    \label{alg:mhg-search}
    \begin{algorithmic}[1]
        \Require Query $q$, memory $\mathcal{M}_T$, controller $\pi_\theta$,
                 step budget $B_{\mathrm{step}}$, image budget $B_{\mathrm{img}}$
        \Ensure Retrieved evidence $E_q$
        \State Initialize $h_1=()$ and an empty registry of retrieved facts
        \For{$k=1,\ldots,B_{\mathrm{step}}$}
            \State Reason over $q,h_k$; select $a_k\sim\pi_\theta(\,\cdot\mid q,h_k)$
            \If{$a_k$ selects hypergraph retrieval}
                \State $o_k\gets$ layer-specific retrieval with applicable traversal
                       (Eq.~\eqref{eq:mhg-traverse})
            \ElsIf{$a_k$ selects visual retrieval}
                \State $o_k\gets$ matching images and captions
            \ElsIf{$a_k$ selects source inspection}
                \State $o_k\gets$ original dialogue and images via $\rho$
            \Else \Comment{Evidence submission}
                \State Validate submitted references and source associations
                \If{submission is accepted}
                    \State \Return $E_q$ assembled from selected evidence and notes
                \EndIf
                \State $o_k\gets$ feedback requesting additional evidence
            \EndIf
            \State Register retrieved facts and facts associated with accessed sources
            \State Attach requested images to $o_k$ within $B_{\mathrm{img}}$
            \State $h_{k+1}\gets h_k\mathbin{\|}(a_k,o_k)$
        \EndFor
        \State \Return Finalize $E_q$ through forced evidence submission
    \end{algorithmic}
\end{algorithm}

\section{Additional Experimental Details}
\label{app:experiment}

\subsection{Benchmarks and Metrics}
\label{app:experiment-benchmarks}

\paragraph{MemEye.}
MemEye~\citep{guo2026memeye} evaluates visual memory along two axes:
visual evidence granularity, from scene-level to pixel-level details,
and memory reasoning depth, from atomic retrieval to evolutionary
synthesis. It contains eight everyday scenarios and 371 questions, each
with paired multiple-choice and open-ended versions. These tasks require
agents to recover visually grounded facts, distinguish similar instances,
and reason over observations accumulated across sessions.

\paragraph{Mem-Gallery.}
Mem-Gallery~\citep{bei2026memgallery} evaluates long-term multimodal
conversational memory using persona-driven dialogues across 20 scenarios.
The evaluated release contains 240 sessions, 3,962 dialogue rounds,
1,490 images, and 1,711 open-ended questions. It probes memory extraction,
reasoning, and knowledge management over extended text--image histories.

\paragraph{WorldMemArena-S.}
WorldMemArena~\citep{liu2026worldmemarena} evaluates memory in GUI and
embodied-agent trajectories, as well as lifelong personal and project
interactions. We use its released 150-sample subset, denoted
WorldMemArena-S, containing 7,906 questions. Its 11 question types cover
textual memory, visual memory, and reasoning, including factual recall,
dynamic updates, conflict resolution, and cross-modal reasoning.
Evaluation follows the annotated QA checkpoints: MHG-Mem constructs
memory from only the sessions covered by each checkpoint. Prefix
checkpoints reuse the corresponding historical state, while non-prefix
session windows are constructed independently to prevent access to
out-of-scope observations.

\paragraph{Evaluation Metrics.}
Let $\mathcal{Q}_b$ denote the open-ended questions of benchmark $b$ and
$\mathcal{Q}_{\mathrm{MemEye}}^{\mathrm{MCQ}}$ its multiple-choice questions.
Answers are lowercased, punctuation-normalized, and whitespace-normalized
before scoring; all reported values are percentages.

For MemEye, options are rotated across $R_q$ equivalent instances. Exact Match
(EM) is the mean accuracy over rotations and questions:
\begin{equation}
\mathrm{EM}=\frac{1}{|\mathcal{Q}_{\mathrm{MemEye}}^{\mathrm{MCQ}}|}
\sum_{q}\frac{1}{R_q}\sum_{r=1}^{R_q}
\mathbf{1}[\hat{c}_{q,r}=c^\star_{q,r}].    
\end{equation}

For open-ended answers, let $\widetilde{T}(\cdot)$ be the normalized token
multiset after Porter stemming and $C_q=\widetilde{T}(\hat{y}_q)\cap
\widetilde{T}(y_q^\star)$. Token F1 is
\begin{equation}
    F_{1,q}=\frac{2P_qR_q}{P_q+R_q},\quad
P_q=\frac{|C_q|}{|\widetilde{T}(\hat{y}_q)|},\quad
R_q=\frac{|C_q|}{|\widetilde{T}(y_q^\star)|},
\qquad
\mathrm{F1}_b=\frac{1}{|\mathcal{Q}_b|}\sum_{q\in\mathcal{Q}_b}F_{1,q}
\end{equation}

with $F_{1,q}=0$ when the overlap is empty. BLEU-1 uses the same
normalization without stemming and combines clipped unigram precision with a
brevity penalty:
\begin{equation}
    \mathrm{BLEU1}_q=\mathrm{BP}_q\,\frac{\sum_w\min\{c_{\hat{y}_q}(w),c_{y_q^\star}(w)\}}
{\sum_w c_{\hat{y}_q}(w)},\qquad
\mathrm{B\text{-}1}_b=\frac{1}{|\mathcal{Q}_b|}\sum_{q\in\mathcal{Q}_b}\mathrm{BLEU1}_q
\end{equation}

where $\mathrm{BP}_q=1$ if $|\hat{y}_q|>|y_q^\star|$ and otherwise
$\mathrm{BP}_q=\exp(1-|y_q^\star|/|\hat{y}_q|)$. We use NLTK method-1
smoothing for sentence-level BLEU-1.

MemEye and Mem-Gallery use their native reference-based judge, whose parsed
score is $s_q\in\{0,0.5,1\}$; $\mathrm{Judge}_b=|\mathcal{Q}_b|^{-1}
\sum_{q\in\mathcal{Q}_b}s_q$. WorldMemArena-S labels each answer as correct, hallucination, or
omission and reports QA-C as its Judge:
\begin{equation}
    \mathrm{QA\text{-}C}=\mathrm{Judge}_{\mathrm{WMA\text{-}S}}
=\frac{|\{q\in\mathcal{Q}_{\mathrm{WMA\text{-}S}}:\ell_q=\mathrm{correct}\}|}{N_{\mathrm{valid}}}
\end{equation}

For WorldMemArena memory diagnostics, let $\mathcal{G}_s$ be the gold memory
points, $\mathcal{D}_s$ the stored add/update items, and $\mathcal{C}_s$ the
gold points judged covered by $\mathcal{D}_s$. With importance weight $w_g$,
Memory Recall is
\begin{equation}
    \texttt{Memory\_Recall}_s=
\frac{\sum_{g\in\mathcal{C}_s}w_g}{\sum_{g\in\mathcal{G}_s}w_g},
\qquad
\texttt{Memory\_Recall}=\operatorname{mean}_{s:\,|\mathcal{G}_s|>0}
\texttt{Memory\_Recall}_s.
\end{equation}

If $n_s^{\mathrm C}$ stored items are judged correct, Memory Correctness is
\begin{equation}
    \texttt{Memory\_Correctness}=\operatorname{mean}_{s:\,|\mathcal{D}_s|>0}
\frac{n_s^{\mathrm C}}{|\mathcal{D}_s|},
\end{equation}

implemented by the released evaluator as
$1-n_s^{\mathrm{incorrect}}/n_s^{\mathrm{checked}}$. For each update, the
post-session memory is classified as updated, both, or outdated, yielding
\begin{equation}
\texttt{Update\_Handling}=
\frac{N_{\mathrm{updated}}+0.5N_{\mathrm{both}}+0N_{\mathrm{outdated}}}{N_{\mathrm{total}}}.    
\end{equation}

Finally, if $\mathcal{V}_q$ is the gold evidence set and
$\mathrm{covered}_q$ is the number of evidence points found by the LLM judge in
the returned retrieval list, retrieval hit rate is
\begin{equation}
\texttt{retrieval\_hit\_rate}=\frac{1}{|\mathcal{Q}_{\mathrm{WMA\text{-}S}}|}
\sum_{q\in\mathcal{Q}_{\mathrm{WMA\text{-}S}}}
\frac{\mathrm{covered}_q}{|\mathcal{V}_q|},    
\end{equation}

where questions without gold evidence contribute zero. WMA metrics are first
computed per trajectory and then averaged across evaluated instances.

Table~\ref{tab:fine-grained-tasks} consolidates the fine-grained task
semantics used by Mem-Gallery and WorldMemArena-S. Shared capabilities are
listed once, while benchmark-specific labels and definitions are retained.

\begin{table*}[t]
\centering
\small
\setlength{\tabcolsep}{4pt}
\renewcommand{\arraystretch}{1.08}
\caption{Fine-grained task definitions used by Mem-Gallery and WorldMemArena-S. A dash indicates that the benchmark has no directly corresponding task label.}
\label{tab:fine-grained-tasks}
\begin{tabularx}{\linewidth}{@{}>{\raggedright\arraybackslash}p{0.18\linewidth}>{\raggedright\arraybackslash}p{0.37\linewidth}>{\raggedright\arraybackslash}X@{}}
\toprule
Capability & Mem-Gallery & WorldMemArena-S \\
\midrule
Fact recall & \textbf{FR} (Factual Retrieval): retrieve an explicit fact from the conversation. & \textbf{FR} (Fact Recall): retrieve one concrete fact stated earlier in the trajectory. \\
Multi-entity reasoning & \textbf{MR} (Multi-entity Reasoning): combine retrieved information about multiple entities to infer an answer. & -- \\
Temporal reasoning & \textbf{TR} (Temporal Reasoning): resolve a time-dependent question. & \textbf{TR} (Temporal Reasoning): reason about timing, ordering, or duration. \\
Visual or cross-modal reasoning & \textbf{VR} (Visual-centric Reasoning): use visual information in addition to the dialogue. & \textbf{CMR} (Cross-modal Reasoning): combine textual and visual memory. \\
Test-time learning & \textbf{TTL} (Test-time Learning): learn visual knowledge from historical images and apply it during QA. & \textbf{TTL} (Test-Time Learning): apply a rule or skill taught earlier in the trajectory. \\
Visual search & \textbf{VS} (Visual-centric Search): find the image(s) matching a query and return their image IDs. & \textbf{VS} (Visual Search): locate an object or attribute across visual memory. \\
Conflict handling & \textbf{CD} (Conflict Detection): detect a contradiction between the history and the question. & \textbf{MC} (Memory Conflict): resolve disagreement between two memory items using context. \\
Knowledge reasoning or resolution & \textbf{KR} (Knowledge Resolution): resolve updates or conflicts by prioritizing the most recent information. & \textbf{KR} (Knowledge Reasoning): combine stored facts with general world knowledge. \\
Answer boundary & \textbf{AR} (Answer Refusal): answer ``Not mentioned'' when the information is absent. & -- \\
Dynamic update & -- & \textbf{DU} (Dynamic Update): report the latest value after a queried fact is overwritten. \\
Visual fact recall & -- & \textbf{VFR} (Visual Fact Recall): recover a fact anchored to a specific image. \\
Visual update & -- & \textbf{VU} (Visual Update): report the most recent state after a visual attribute changes. \\
\bottomrule
\end{tabularx}
\end{table*}

\subsection{Implementation Details}
\label{app:experiment-implementation}

We use Qwen3-VL-8B, GPT-4.1-mini, and Qwen3.8-27B memory backbones. The selected backbone performs memory construction, agentic retrieval,and answer generation in MHG-Mem.
We instantiate the text encoder $\phi_T$ with
\texttt{all-MiniLM-L6-v2}, and the image and cross-modal encoders
$(\phi_I,\psi_T)$ with \texttt{google/siglip2-base-patch16-384}. Qwen3-VL-8B and Qwen3.8-27B are served using vLLM on four NVIDIA A800 GPUs, while GPT-4.1-mini is accessed through the official API. All models apply 65,536 max token limit and 0 temperature. Each result is averaged over three runs.

In memory construction stage, the context $\mathcal{C}_t$ contains up to three
preceding rounds from the same session, and at most eight images from
$\mathcal{I}_t$ are passed to the construction models. Occurrence linking
uses the top $K=3$ candidate hyperedges in Eq.~\eqref{eq:mhg-link-score},
with $w_T=w_I=0.5$. Evolution synthesis retains only records $z$ grounded in at least
two facts in $\mathcal{F}_z$ and observations from at least two sessions. For WorldMemArena-S,
$\widehat{\mathcal{Z}}_r$ is refreshed after every visible session checkpoint;
for the static MemEye and Mem-Gallery corpora, it is synthesized after the
available history $\mathcal{D}_{\leq T}$ has been constructed.

In memory retrieval stage, the controller $\pi_\theta$ follows the search loop in
Algorithm~\ref{alg:mhg-search} with $B_{\mathrm{step}}=16$ and $B_{\mathrm{img}}=10$. The entity-layer search returns at most 3 referent groups; the visual search returns at most 5 images. Evolution-layer search returns at most 3 relevant evolution aspects. The fact-layer search returns at most 30 candidates facts.
For MemEye and Mem-Gallery, source
inspection exposes the rounds referenced by $\rho(f)$; WorldMemArena-S exposes
the cited FactNodes $f\in\mathcal{F}$. We follow the benchmark-specific QA and
judging procedures defined above.

\subsection{Baselines}
\label{app:experiment-baselines}

\paragraph{A-Mem.}
A-Mem~\citep{xu2025amem} organizes textual memories as interconnected
notes inspired by the Zettelkasten method. It enriches each note with
contextual descriptions and indexing attributes, establishes links to
related notes, and updates existing memories as new observations arrive.

\paragraph{MemoryOS.}
MemoryOS~\citep{kang2025memoryos} maintains short-term, mid-term, and
long-term memory. Storage, updating, retrieval, and generation modules
coordinate the transfer and consolidation of conversational information
across these tiers to support persistent user knowledge and preferences.

\paragraph{SimpleMem.}
SimpleMem~\citep{liu2026simplemem} compresses interactions into compact,
self-contained textual memory units. Its pipeline combines semantic
structured compression, online semantic synthesis, and intent-aware
retrieval planning to reduce redundancy and assemble query-relevant
context. The unimodal category refers to textual memory representations.

\paragraph{HyperMem.}
HyperMem~\citep{yue2026hypermem} organizes long-term conversations into
topic, episode, and fact levels. Hyperedges group associated episodes
and facts, while hybrid lexical--semantic indexing and coarse-to-fine
retrieval recover related information across these levels.

\paragraph{M2A.}
M2A~\citep{feng2026m2a} combines an immutable raw-message store with a
semantic-memory store of higher-level observations. A conversational
agent and a memory manager coordinate memory updates and iterative
retrieval, retaining links between semantic memories and their original
multimodal evidence.

\paragraph{MMA.}
MMA~\citep{lu2026mma} is a confidence-aware multimodal memory agent.
It combines semantic relevance with memory reliability, estimated from
source credibility, temporal decay, and conflict-aware consensus, to
reduce reliance on stale or conflicting observations.

\paragraph{SRAG.}
SRAG denotes the multimodal Semantic RAG baseline in
MemEye~\citep{guo2026memeye}. It ranks historical records by combining
text and image embedding similarities and supplies retrieved text and
associated original images to the answer model. It provides a direct
retrieval comparison to methods that construct additional memory
abstractions or relational structures.

\paragraph{UniversalRAG.}
UniversalRAG~\citep{yeo2025universalrag} routes queries to corpora
organized by modality and granularity. Modality-aware routing selects
the appropriate evidence source, while retrieval at different
granularities supplies context for answer generation.

\section{Additional Experiment Results and Analysis}
\label{additional_exp_analysis}

\begin{figure}[h]
\centering
\includegraphics[width=1.0\linewidth]{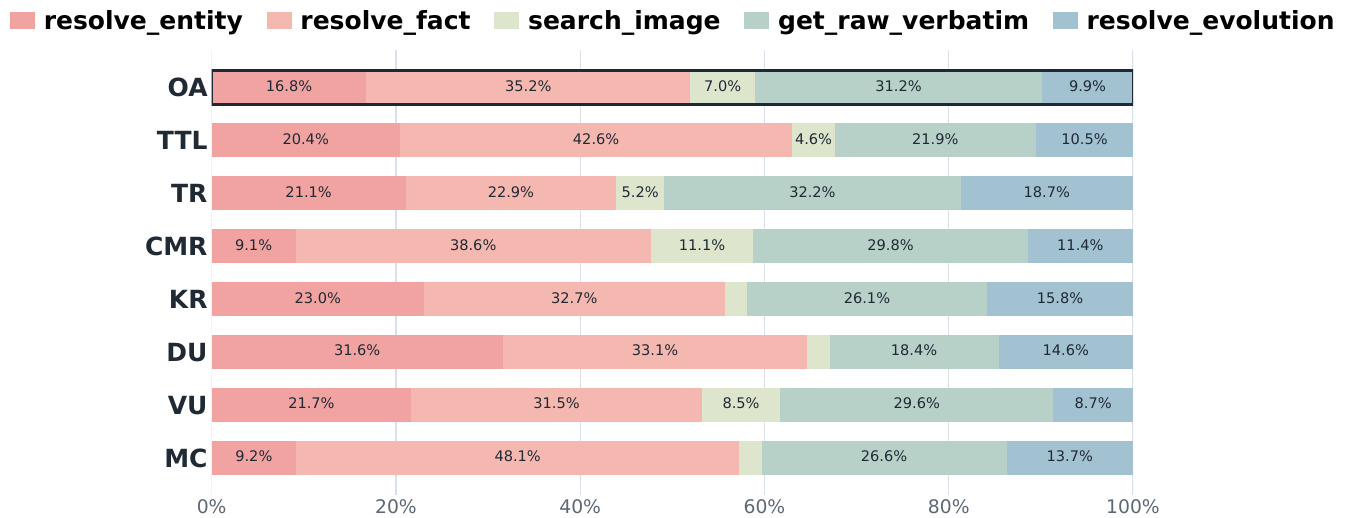}
\caption{Average tool utilization on WorldMemArena-S with Qwen3.8-27B, reported overall and by 7 task types: OA (Overall Average), TTL (Test-time Learning), TR (Temporal Reasoning), CMR (Cross-modal Reasoning), KR (Knowledge Reasoning), DU (Dynamic Updating), VU (Visual Update), and MC (Memory Conflict).}
\label{fig:tool_call_ratios}
\end{figure}

\subsection{Retrieval Tool Analysis}
Figure~\ref{fig:tool_call_ratios} reports average tool usage on WorldMemArena-S with Qwen3.8-27B. Hypergraph tools ($\texttt{resolve\_entity}$, $\texttt{resolve\_fact}$, and $\texttt{resolve\_evolution}$) and the inspection tool ($\texttt{get\_raw\_verbatim}$) dominate agentic search. In particular, $\texttt{resolve\_fact}$ and $\texttt{get\_raw\_verbatim}$ are invoked most frequently, suggesting that many questions can be traced through atomic facts and verified against original dialogue or visual evidence. Entity- and evolution-layer retrieval also accounts for substantial usage in TTL, TR, KR, DU, and VU tasks. This pattern suggests that, for temporal reasoning and multimodal updates, the agent links observations of the same entity and retrieves relevant state histories to reason about attribute updates, conflicts, and replacements.

\begin{wrapfigure}{r}{0.5\textwidth}
    \centering
    \includegraphics[width=\linewidth]{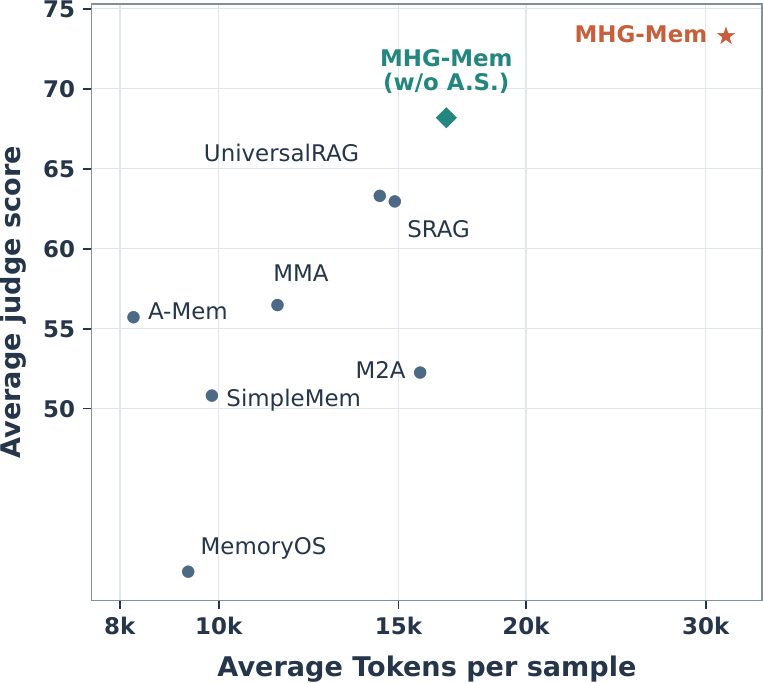}
    \caption{Trade-off between token consumption and QA performance among different baselines. w/o A.S. represents replace MHG-Mem's agentic search with top-k retrieval.}
    \label{fig:performance_tokens_tradeoff}
\end{wrapfigure}

\subsection{Cost Analysis}
Figure~\ref{fig:performance_tokens_tradeoff} compares token consumption and QA performance across methods. Following prior work, we measure the average total number of LLM input and output tokens per sample across memory construction, retrieval, and QA. For multimodal methods, this count also includes image input tokens determined by the corresponding LLM's visual encoding. We additionally report MHG-Mem (w/o A.S.), an ablation that replaces agentic search with top-$k$ hypergraph retrieval.

Among the baselines, unimodal memory methods that abstract images into textual captions consume fewer tokens but achieve lower QA performance, as fine-grained visual details are omitted. Among multimodal methods, RAG-based approaches (SRAG, MMA, and UniversalRAG) require fewer tokens than memory-based approaches (M2A and MHG-Mem) because they avoid explicit memory construction and iterative search. Although MHG-Mem improves benchmark-averaged QA performance by 8.20--9.98 points over the strongest baseline for each backbone, it consumes approximately 2x tokens, reflecting both the processing of multimodal information during memory construction and iterative tool calls during retrieval. The additional cost primarily arises from agentic search: MHG-Mem (w/o A.S.) requires only approximately 1--2k more tokens than multimodal baselines while still delivering performance gains of 5\%--15\%, offering a favorable balance between cost and performance. These results suggest using similarity-based hypergraph retrieval in resource-constrained settings and agentic hypergraph search when higher QA performance is the priority.

\begin{figure}[h]
\centering
\includegraphics[width=1.0\linewidth]{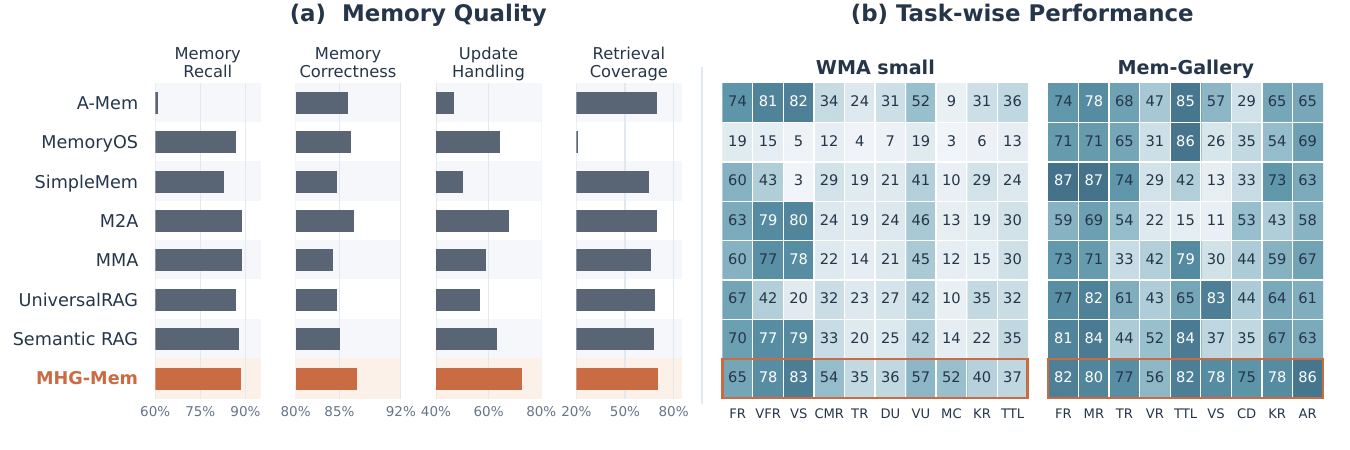}
\caption{(a) Memory quality and (b) fine-grained tasks performance on WorldMemArena-S
(WMA-S) and Mem-Gallery with Qwen3-VL-8B.}
\label{fig:qwen3vl8b_memory_and_tasks}
\end{figure}

\begin{figure}[h]
\centering
\includegraphics[width=1.0\linewidth]{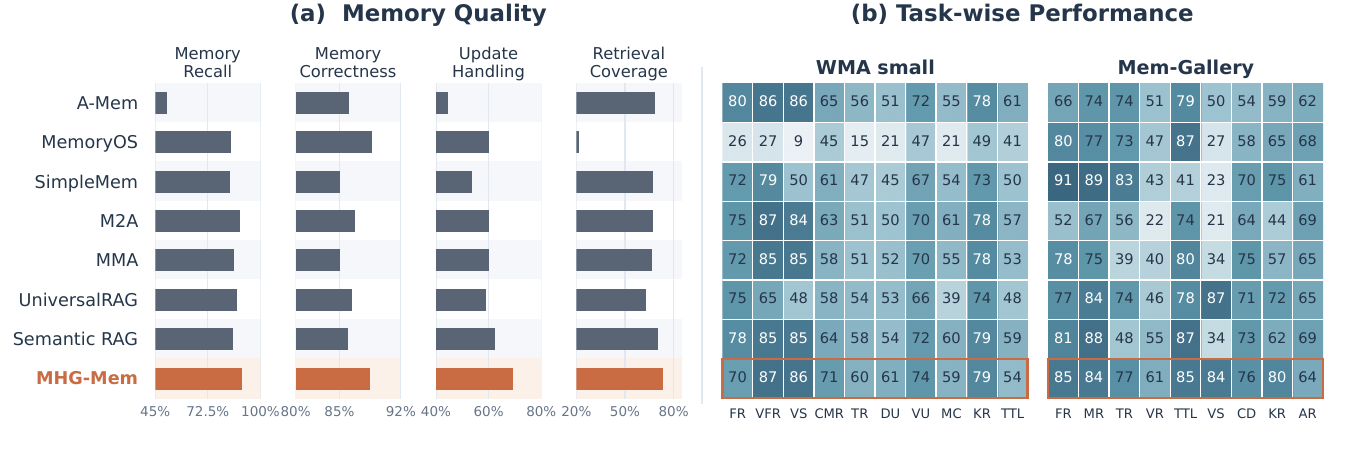}
\caption{(a) Memory quality and (b) fine-grained tasks performance on WorldMemArena-S
(WMA-S) and Mem-Gallery with GPT-4.1-mini.}
\label{fig:gpt41mini_memory_and_tasks}
\end{figure}

\subsection{Memory Quality and Fine-grained Analysis on More Backbones}
Figure~\ref{fig:qwen3vl8b_memory_and_tasks} and~\ref{fig:gpt41mini_memory_and_tasks} extend the memory quality and task-level analysis to Qwen3-VL-8B and GPT-4.1-mini, respectively. Across both backbones, MHG-Mem achieves the highest memory correctness and update handling, with retrieval coverage among the best. These results show that effective state maintenance coexists with accurate memory retention and strong evidence coverage, consistent with our objective of organizing evolving observations while keeping their supporting evidence accessible.

The most consistent task-level advantages concern visual evidence use and state evolution. On WorldMemArena-S, MHG-Mem outperforms all compared baselines in cross-modal reasoning, temporal reasoning, dynamic updates, and visual updates with both backbones. In particular, cross-modal reasoning improves over the strongest baseline by 20\% and 6\% with Qwen3-VL-8B and GPT-4.1-mini, respectively. On Mem-Gallery, visual-centric reasoning improves by 4\% and 6\%, while knowledge resolution improves by 5\% under both backbones; conflict detection also benefits in both settings. These recurring gains align with the two requirements motivating MHG-Mem: accessing relevant visual observations and integrating changes across interaction histories. They are consistent with the intended role of entity histories and state-specific evidence links in supporting comparison and verification across observations.

The improvements nevertheless depend on the task and backbone. For example, Mem-Gallery temporal reasoning improves by 3\% with Qwen3-VL-8B but remains 6\% below the strongest baseline with GPT-4.1-mini. MHG-Mem also trails the strongest factual-recall baseline on both benchmarks under both backbones. Thus, the memory-quality improvements do not translate uniformly into every downstream capability: visual evidence use and state updates show reproducible benefits, while factual recall remain areas for improvement.

\begin{table*}[!h]
    \centering
    \caption{Hyperparameter sensitivity at the construction and retrieval stages with Qwen3.8-27B. The construction-stage sweep varies the number of top-$K$ occurrence-linking candidates. The retrieval-stage sweep evaluates the agentic search budget $(B_{\mathrm{step}}, B_{\mathrm{img}})$ and reports the force-submit rate. LLM-Judge scores are reported. Avg. means the average judge score across three benchmark. Bold values are the best within each stage.}
    \label{tab:parameter-analysis}
    \setlength{\tabcolsep}{4.5pt}
    \renewcommand{\arraystretch}{1.08}
    \begin{adjustbox}{max width=\linewidth}
    \begin{tabular}{@{}llrrrrc@{}}
        \toprule
         & \textbf{Setting} & \textbf{MemEye} & \textbf{Mem-Gallery} & \textbf{WMA-S} & \textbf{Avg.} & \textbf{Force-submit (\%)} \\
        \midrule
        \rowcolor{gray!12}
        \multicolumn{7}{@{}l@{}}{\textbf{Construction stage}} \\
        & $K=1$ & 64.70 & 85.84 & 64.41 & 71.65 & --- \\
        & $K=2$ & 65.54 & 86.99 & 65.67 & 72.73 & --- \\
        & $K=3$ & 65.93 & 87.85 & 66.05 & 73.28 & --- \\
        & $K=4$ & 65.22 & 87.46 & \textbf{66.35} & 73.01 & --- \\
        & $K=5$ & \textbf{66.18} & \textbf{88.10} & 65.71 & \textbf{73.33} & --- \\
        \midrule
        \rowcolor{gray!12}
        \multicolumn{7}{@{}l@{}}{\textbf{Retrieval stage}} \\
        & $(B_{\mathrm{step}}, B_{\mathrm{img}})=(4, 4)$ & 59.44 & 81.79 & 60.63 & 67.29 & 37.3 \\
        & $(B_{\mathrm{step}}, B_{\mathrm{img}})=(10, 4)$ & 62.02 & 84.76 & 62.37 & 69.72 & 26.8 \\
        & $(B_{\mathrm{step}}, B_{\mathrm{img}})=(10, 8)$ & 63.91 & 86.30 & 63.10 & 71.10 & 24.6 \\
        & $(B_{\mathrm{step}}, B_{\mathrm{img}})=(16, 8)$ & 65.42 & 87.04 & 64.99 & 72.48 & 17.9 \\
        & $(B_{\mathrm{step}}, B_{\mathrm{img}})=(16, 10)$ & 65.73 & \textbf{87.85} & 66.05 & 73.21 & 17.5 \\
        & $(B_{\mathrm{step}}, B_{\mathrm{img}})=(24, 10)$ & 65.47 & 87.63 & 66.51 & 73.20 & \textbf{13.4} \\
        & $(B_{\mathrm{step}}, B_{\mathrm{img}})=(24, 16)$ & \textbf{66.09} & 87.46 & \textbf{66.80} & \textbf{73.45} & 13.5 \\
        \bottomrule
    \end{tabular}
    \end{adjustbox}
\end{table*}

\subsection{Hyperparameter Analysis}
\textbf{Analysis of the occurrence-linking candidate count $K$.}
The upper half of Table~\ref{tab:parameter-analysis} reports QA performance with different top-$K$ candidate counts for occurrence linking during memory construction. Across the tested values of $K$, scores vary by only 1.48, 2.26, and 1.94 points on MemEye, Mem-Gallery, and WorldMemArena-S, respectively. The average score is 73.28 at $K=3$, compared with 73.33 at the best tested value, $K=5$, indicating that performance is relatively insensitive to $K$. Since larger candidate sets increase context length with limited performance gains, we retrieve at most $K=3$ candidates for occurrence linking.

\textbf{Analysis of agentic budgets $B_{\mathrm{step}}$ and $B_{\mathrm{img}}$.}
The lower half of Table~\ref{tab:parameter-analysis} examines the effects of retrieval budgets. Increasing $B_{\mathrm{step}}$ or $B_{\mathrm{img}}$ allows longer retrieval trajectories or more image evidence, respectively, but also increases input token consumption. Raising the budgets from $(4,4)$ to $(16,8)$ improves the average QA score from 67.29 to 72.48, a gain of 5.19 percentage points, while reducing the forced-submission rate from 37.3\% to 17.9\%. These results suggest that restrictive budgets can prevent the agent from gathering sufficient evidence through hypergraph search. For example, questions requiring fine-grained visual distinctions among multiple entities may require inspecting more image crops than the budgets permit, leading to premature submission and lower performance. However, increasing $(B_{\mathrm{step}}, B_{\mathrm{img}})$ further to $(24,16)$ yields only a 0.24-point improvement over $(16,10)$ while consuming more tokens. To balance the cost and performance trade-off, we therefore set $(B_{\mathrm{step}}, B_{\mathrm{img}})\leq(16,10)$ for experiment.

\subsection{More Case Studies}
Figures~\ref{fig:case2} and~\ref{fig:case3} illustrate how MHG-Mem reconstructs entity histories through adaptive retrieval and source verification. In Figure~9, the agent retrieves the faucet's evolution record and traces its supporting facts to original frames, recovering the complete \emph{off--running--off} sequence; the baselines omit states or fail to answer. In Figure~10, retrieval begins with entity resolution, followed by evolution retrieval and source inspection. This recovers the oak-log count sequence $1 \rightarrow 2 \rightarrow 5$ and verifies the latest count from the inventory image, whereas UniversalRAG and M2A recover only $1 \rightarrow 2$. Both cases highlight the value of linking observations under a shared entity identity while preserving state-specific evidence for cross-session synthesis and visual verification.

\begin{figure}[h]
\centering
\includegraphics[width=1.0\linewidth]{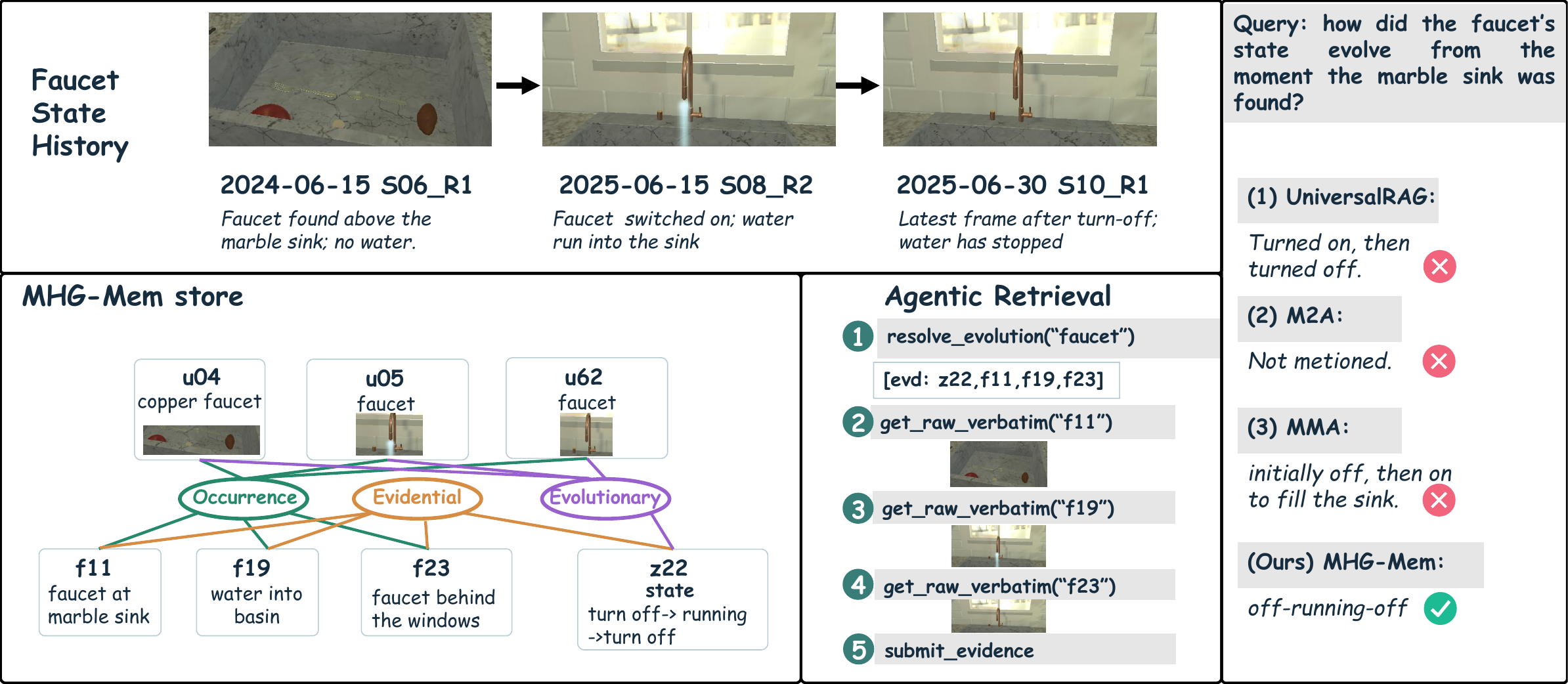}
\caption{Case study in an embodied agent scenario. MHG-Mem records the faucet's state throughout the multimodal agent's action history.}
\label{fig:case2}
\end{figure}

\begin{figure}[h]
\centering
\includegraphics[width=1.0\linewidth]{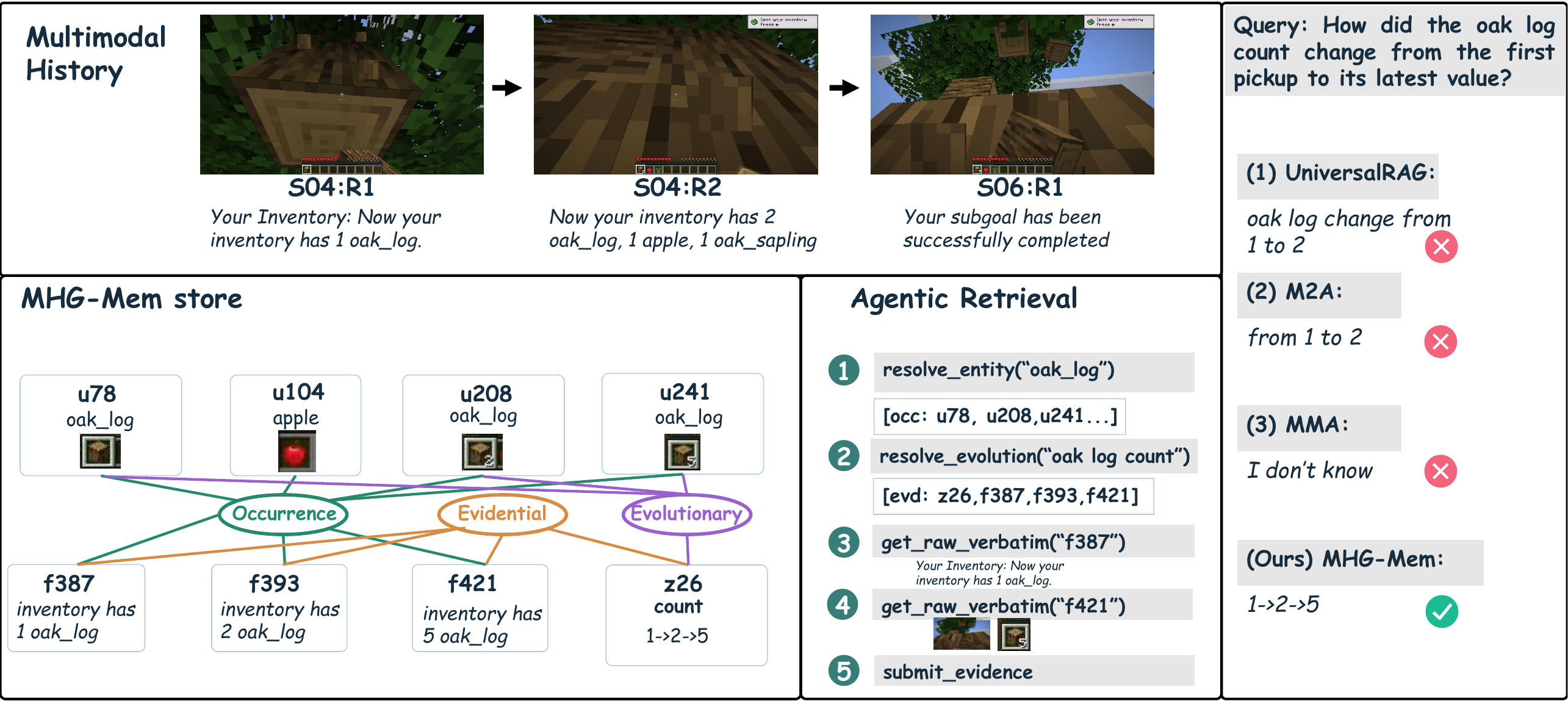}
\caption{Case study in a GUI agent scenario. MHG-Mem records the count state of oak log in Minecraft throughout the multimodal agent's action history.}
\label{fig:case3}
\end{figure}

\clearpage
\section{Prompt Templates}
\label{app:prompts}
The five prompt templates below cover memory construction (Appendix~\ref{app:mhg-construction-details})
and agentic search (Table~\ref{tab:mhg-tools}). We preserve prompt wording and
placeholders, omit Python assignment wrappers, and adjust line wrapping for readability. For additional LLM-as-a-judge and QA prompts, we use the original benchmark-provided prompts to ensure fair comparisons.

\subsection{Atomic fact construction}
\label{app:prompt-fact}
Converts the current round into standalone facts while retaining temporal and visual details.

\begin{mhgprompt}{promptsage}{01\quad Atomic fact construction}
\par
You convert ONE conversation round into a set of ATOMIC MEMORIES \textemdash{} short,\allowbreak{} standalone facts that a separate answerer will later retrieve to answer questions. You are given the round's dialogue text,\allowbreak{} its image caption(s),\allowbreak{} and its REAL image(s). You may also see up to three EARLIER rounds of the same conversation as read-only context (their text and image captions only \textemdash{} no earlier pictures),\allowbreak{} and a list of EXISTING MEMORIES already stored from past rounds. Write memories ONLY for the CURRENT round; use the earlier rounds and the current image(s) just to understand the round and to resolve references.\par
\par
Read BOTH the caption AND the real pixels of the current round. The caption often spells a detail out in words (card names,\allowbreak{} on-screen text,\allowbreak{} dashboard labels,\allowbreak{} message transcripts); the pixels carry details the caption omits (colors,\allowbreak{} counts,\allowbreak{} spatial layout,\allowbreak{} what is actually shown). Capture from both.\par
\par
\textbf{\#\# ADD-ONLY against EXISTING MEMORIES}\par
If an \textasciigrave{}\#\# EXISTING MEMORIES\textasciigrave{} list is given,\allowbreak{} treat it as memories that are ALREADY STORED. Your job is ADDITIVE: output the facts from the CURRENT round that are NOT already represented there. Do not restate,\allowbreak{} paraphrase,\allowbreak{} or lightly reword an existing memory. Never create a memory from the existing list alone \textemdash{} every memory you output must be supported by the CURRENT round.\par
- Default to WRITING a fact. Suppress a candidate ONLY when it is the SAME fact about the SAME moment as an existing memory. When in doubt,\allowbreak{} WRITE it \textemdash{} a missed fact can never be recovered,\allowbreak{} a near-duplicate is cheap.\par
- The mere presence of the same entity,\allowbreak{} brand,\allowbreak{} game,\allowbreak{} route,\allowbreak{} dashboard,\allowbreak{} or template in an existing memory does NOT make the current round a duplicate. A new event,\allowbreak{} a new state,\allowbreak{} a changed value,\allowbreak{} a different date/\allowbreak{}session,\allowbreak{} a new opinion or conclusion the user draws,\allowbreak{} or any new detail is ALWAYS a new memory \textemdash{} write it.\par
\par
Write each memory to this STANDARD:\par
- ATOMIC \& self-contained: one fact per memory \textemdash{} a single event,\allowbreak{} state,\allowbreak{} attribute,\allowbreak{} or relationship \textemdash{} understandable on its own. Begin from the concrete named subject.\par
- ABSOLUTE time: anchor every time reference to the Observation Date given. Convert "today /\allowbreak{} tomorrow /\allowbreak{} yesterday /\allowbreak{} last week /\allowbreak{} this Monday" to an explicit calendar date (YYYY-MM-DD). Never store a bare relative time or weekday.\par
- DESCRIBE the state,\allowbreak{} do NOT compute the answer. Record what is shown or said \textemdash{} list the cards in a hand,\allowbreak{} the panels on a dashboard and their values,\allowbreak{} the people in a chat and who said what. Do NOT add totals,\allowbreak{} counts across rounds,\allowbreak{} winners,\allowbreak{} comparisons,\allowbreak{} or any conclusion the data merely implies. Faithful state,\allowbreak{} never a frozen verdict.\par
- CAPTURE what a later question could ask: numbers,\allowbreak{} colors,\allowbreak{} spatial relations (left /\allowbreak{} right /\allowbreak{} above /\allowbreak{} below /\allowbreak{} lower-left /\allowbreak{} behind /\allowbreak{} beside /\allowbreak{} under),\allowbreak{} in-image text,\allowbreak{} per-item attributes,\allowbreak{} who did what,\allowbreak{} and any change from the earlier context.\par
- Only return an empty list when the CURRENT round genuinely contains NOTHING beyond what is already stored (e.g. a bare "ok",\allowbreak{} "thanks",\allowbreak{} "correct" with no new content of its own).\par
\par
Return ONLY JSON:\par
\{"atomic\_facts":["\textless{}one standalone memory\textgreater{}",\allowbreak{}"\textless{}another standalone memory\textgreater{}"]\}\par
If the round carries nothing new worth remembering,\allowbreak{} return \{"atomic\_facts":[]\}.\par
\end{mhgprompt}

\clearpage

\subsection{Multimodal entity extraction}
\label{app:prompt-entity}
Extracts nameable entities and grounds visual or multimodal mentions in image regions.

\begin{mhgprompt}{promptblue}{02\quad Multimodal entity extraction}
\par
You extract MULTIMODAL ENTITIES from ONE conversation round (its dialogue text AND its image,\allowbreak{} if any).\par
\par
An ENTITY is a persistent,\allowbreak{} nameable THING that could be asked about later \textemdash{} for example: a brand,\allowbreak{} logo,\allowbreak{} product. This list is illustrative,\allowbreak{} not exhaustive \textemdash{} extract any concrete,\allowbreak{} nameable thing a later question could point at\par
\par
Give each entity ONE modality,\allowbreak{} decided by WHERE you learned about it:\par
- \textasciigrave{}visual\textasciigrave{}: something you can SEE in the image. This is the DEFAULT for anything in the picture \textemdash{} objects,\allowbreak{} people,\allowbreak{} places,\allowbreak{} logos,\allowbreak{} products,\allowbreak{} signs,\allowbreak{} on-screen panels. Give it a short specific name (e.g. "red delivery truck",\allowbreak{} "Coca-Cola logo"). bbox required.\par
- \textasciigrave{}multimodal\textasciigrave{}: visible in the image AND also named in the dialogue text. Name it from the dialogue; the box is on the region that shows it. bbox required.\par
- \textasciigrave{}text\textasciigrave{}: named in the dialogue text (what the USER or ASSISTANT actually typed) but NOT visible in the image. bbox is null. This is RARE \textemdash{} use it ONLY for things mentioned in the conversation that you cannot see.\par
\par
For every \textasciigrave{}visual\textasciigrave{} and \textasciigrave{}multimodal\textasciigrave{} entity,\allowbreak{} bbox is absolute pixels \textasciigrave{}[x0,\allowbreak{}y0,\allowbreak{}x1,\allowbreak{}y1]\textasciigrave{} in the image's pixel frame,\allowbreak{} tight around the thing.\par
\par
Return ONLY JSON:\par
\{"entities":[\{"name":"...",\allowbreak{}"type":"\textless{}brand|\allowbreak{}person|\allowbreak{}character|\allowbreak{}object|\allowbreak{}product|\allowbreak{}place|\allowbreak{}animal|\allowbreak{}organization|\allowbreak{}other\textgreater{}",\allowbreak{}"modality":"visual|\allowbreak{}multimodal|\allowbreak{}text",\allowbreak{}"bbox":[x0,\allowbreak{}y0,\allowbreak{}x1,\allowbreak{}y1] or null\}]\}\par
\par
Rules:\par
- Only extract things written in the dialogue or clearly visible in the image \textemdash{} never invent.\par
- If nothing qualifies,\allowbreak{} return \{"entities":[]\}.\par
\end{mhgprompt}

\subsection{Occurrence linking}
\label{app:prompt-occurrence}
Associates a candidate mention with a stored referent, creates a new referent, or rejects an invalid extraction.

\begin{mhgprompt}{promptsage}{03\quad Occurrence linking}
\par
You decide how to store ONE candidate entity into a memory of already-known entities.\par
\par
You are given:\par
- ROUND CONTEXT: the conversation round the candidate was found in (its text,\allowbreak{} and the full round image if any) \textemdash{} use it to understand what the candidate actually is.\par
- CANDIDATE: its name,\allowbreak{} type,\allowbreak{} modality,\allowbreak{} and,\allowbreak{} if visual,\allowbreak{} its cropped region (shown below).\par
- MATCHES: most similar to the candidate (by text and visual similarity),\allowbreak{} each with its best-matching stored region (shown below).\par
\par
Choose EXACTLY ONE action:\par
- \textasciigrave{}update\textasciigrave{}: the candidate is the SAME real-world thing as one MATCH\textemdash{} link to that entity id. Also use \textasciigrave{}update\textasciigrave{} for a repeat sighting of a thing already stored,\allowbreak{} to record this new occurrence.\par
- \textasciigrave{}add\textasciigrave{}: the candidate is a REAL entity but none of the MATCHES is the same real-world thing.\par
- \textasciigrave{}skip\textasciigrave{}: ONLY when the candidate is NOT a real,\allowbreak{} storable entity (an extraction mistake or abstract concept).\par
\par
Return ONLY JSON:\par
\textasciigrave{}\{\{"action":"update|\allowbreak{}add|\allowbreak{}skip",\allowbreak{}"target\_entity\_id":"\textless{}id,\allowbreak{} only if update\textgreater{}",\allowbreak{}"add\_alias":"\textless{}optional new alias\textgreater{}",\allowbreak{}"reason":"\textless{}one line\textgreater{}"\}\}\textasciigrave{}\par
\end{mhgprompt}

\clearpage

\subsection{State evolution synthesis}
\label{app:prompt-evolution}
Identifies changing aspects and binds each reported state to its supporting fact identifier.

\begin{mhgprompt}{promptlavender}{04\quad State evolution synthesis}
\par
You are given ONE referent (a thing that recurs in a long conversation) and the time-ordered list of ATOMIC FACTS recorded about it. Your job is to report which ASPECTS of this referent actually CHANGED over time.\par
\par
An ASPECT is a single named property that can hold different values at different times \textemdash{} for example: hand\_size,\allowbreak{} location,\allowbreak{} owner,\allowbreak{} color,\allowbreak{} price,\allowbreak{} status,\allowbreak{} score,\allowbreak{} version,\allowbreak{} participants,\allowbreak{} weight.\par
\par
\textbf{\#\# What counts as a change}\par
A change means: the SAME aspect of the SAME referent holds DIFFERENT values at different times. For example:\par
- "is in the kitchen" then "is in the bedroom"-\textgreater{} aspect \textasciigrave{}location\textasciigrave{} CHANGED. Report it.\par
- "costs \$12" then "costs \$9"-\textgreater{} aspect \textasciigrave{}price\textasciigrave{} CHANGED. Report it.\par
\par
\textbf{\#\# What does NOT count as a change (this is the common case \textemdash{} read carefully)}\par
- A value stated repeatedly,\allowbreak{} or restated in different words,\allowbreak{} is NOT a change.\par
- New,\allowbreak{} additional details about DIFFERENT aspects are NOT a change unless one aspect's own value is replaced by another value.\par
\par
\textbf{\#\# How to write what you find}\par
For each aspect that genuinely changed:\par
- \textasciigrave{}aspect\textasciigrave{}: a short lowercase snake\_case key naming the property (\textasciigrave{}hand\_size\textasciigrave{},\allowbreak{} not "the size of the hand").\par
- \textasciigrave{}states\textasciigrave{}: two or more entries in TIME ORDER. Each entry MUST carry:\par
- \textasciigrave{}value\textasciigrave{}: the value at that time,\allowbreak{} as short as possible (a number,\allowbreak{} a color,\allowbreak{} a place,\allowbreak{} a name).\par
- \textasciigrave{}fact\_id\textasciigrave{}: the id of the fact you read that value from. It MUST be one of the fact ids listed below,\allowbreak{} copied exactly. Never invent an id.\par
- \textasciigrave{}summary\textasciigrave{}: one sentence DESCRIBING the observed states in order,\allowbreak{} with their dates.\par
- \textasciigrave{}update\_type\textasciigrave{}: one of\par
- \textasciigrave{}replacement\textasciigrave{} \textemdash{} the new value replaces the old one (a measurement,\allowbreak{} a location,\allowbreak{} a price)\par
- \textasciigrave{}correction\textasciigrave{} \textemdash{} the later fact says the earlier one was wrong\par
- \textasciigrave{}addition\textasciigrave{}  \textemdash{} the value grew by accumulating items rather than being replaced\par
\par
Return ONLY JSON:\par
\{"aspects":[\{"aspect":"\textless{}snake\_case\textgreater{}",\allowbreak{}"states":[\{"value":"\textless{}v1\textgreater{}",\allowbreak{}"fact\_id":"\textless{}id\textgreater{}"\},\allowbreak{}\{"value":"\textless{}v2\textgreater{}",\allowbreak{}"fact\_id":"\textless{}id\textgreater{}"\}],\allowbreak{}"summary":"\textless{}one sentence describing the states in order\textgreater{}",\allowbreak{}"update\_type":"replacement|\allowbreak{}correction|\allowbreak{}addition"\}]\}\par
If no aspect of this referent changed,\allowbreak{} return exactly \{"aspects":[]\}.\par
\end{mhgprompt}

\clearpage

\subsection{Hypergraph-based agentic search}
\label{app:prompt-retrieval}
Guides sequential tool use to assemble evidence for the separate answer model.

The placeholders \texttt{[MEMORY\_SCHEMA]}, \texttt{[TOOL\_SCHEMA]}, and \texttt{\{max\_steps\}} denote the memory schema, tool definitions, and search budget, respectively. The five gathering tools collect evidence; \texttt{submit\_evidence} terminates retrieval.

\begin{mhgprompt}{promptblue}{05\quad Hypergraph-based agentic search}
\par
You are a hypergraph memory retriever. Given a QUESTION,\allowbreak{} you SEARCH a long-term multimodal memory of past conversation rounds and assemble an EVIDENCE CHAIN for a separate QA model. You do NOT answer the question yourself.\par
\par
\textbf{\#\# What is in memory: [MEMORY\_SCHEMA]}\par
\par
\textbf{\#\# Tool Definition: [TOOL\_SCHEMA]}\par
\par
\textbf{\#\# Format \textemdash{} EXACTLY ONE action per reply,\allowbreak{} nothing after it}\par
Each reply MUST be exactly:\par
\textasciigrave{}\textasciigrave{}\textasciigrave{}\par
Action: \textless{}one tool name\textgreater{}\par
Action Input: \{\{\textless{}json\textgreater{}\}\}\par
\textasciigrave{}\textasciigrave{}\textasciigrave{}\par
STRICT RULES:\par
- Output ONE Action,\allowbreak{} ONE Action Input,\allowbreak{} then STOP. Do NOT write a second Thought/\allowbreak{}Action in the same reply \textemdash{} you will only see the result of the FIRST action,\allowbreak{} so any extra ones are wasted.\par
- After \textasciigrave{}Action Input: \{\{...\}\}\textasciigrave{} write NOTHING more. Do not narrate,\allowbreak{} do not plan further steps.\par
- you will receive \textasciigrave{}Observation: \{\{...\}\}\textasciigrave{} (and any images). Read it,\allowbreak{} then send your NEXT single action.\par
When the evidence is sufficient,\allowbreak{} call submit\_evidence.\par
\par
\textbf{\#\# Strategy}\par
- You have five hypergraph gathering tools; use whichever fit the question. Call ONE per turn and read its Observation before the next.\par
- If the question hinges on a visual/\allowbreak{}pixel detail,\allowbreak{} open the most promising facts' rounds with get\_raw\_verbatim (or use search\_image) so you actually SEE the images before citing.\par
- Be efficient: you have at most \{max\_steps\} steps. Submit as soon as the evidence is sufficient.\par
\end{mhgprompt}

\end{document}